
\documentclass{article}

\usepackage{microtype}
\usepackage{graphicx}
\usepackage{subfigure}
\usepackage{booktabs} 

\usepackage{hyperref}



\usepackage[accepted]{icml2019}

\usepackage[utf8]{inputenc} 
\usepackage[T1]{fontenc}    
\usepackage{hyperref}       
\usepackage{url}            
\usepackage{booktabs}       
\usepackage{amsfonts}       
\usepackage{nicefrac}       
\usepackage{microtype}      

\usepackage{algorithm}
\usepackage{algorithmic}
\usepackage{amsmath}
\usepackage{amssymb}
\usepackage{amsthm}
\usepackage{subfigure}
\usepackage{wrapfig}
\usepackage{xspace}
\usepackage{enumitem}

\usepackage{tikz}
\usetikzlibrary{shapes,arrows,shadows,fit, decorations.pathmorphing}
\tikzstyle{op} = [draw, fill=red!20, text centered,
                  minimum height=2em, circle]
\tikzstyle{mlp} = [draw, text width=5em, fill=blue!20, text centered,
                  minimum height=2em, rounded corners]
\tikzstyle{inp} = [text centered, minimum height=2em]
\tikzstyle{system} = [draw, dotted, minimum height=2em]
\tikzstyle{null} = [inner sep=0, outer sep=0]
\tikzset{>=latex}
\tikzset{every picture/.style={line width=1pt}}

\newtheorem{assume}{Assumption}

\newtheorem{lemma}{Lemma}
\newtheorem{theorem}{Theorem}

\newcommand{\DF}{DF\xspace}
\newcommand{\FF}{FF\xspace}
\newcommand{\RFF}{RFF\xspace}
\newcommand{\base}{{\rm Base}\xspace}
\newcommand{\regret}{{\rm Regret}\xspace}

\newcommand{\X}{\mathcal{X}}
\newcommand{\Y}{\mathcal{Y}}
\newcommand{\Z}{\mathcal{Z}}

\newcommand{\A}{\mathcal{A}}
\newcommand{\E}{\mathbb{E}}

\usepackage[normalem]{ulem}

%
\setlength{\intextsep}{8pt plus 1.0pt minus 1.0pt} 
\setlength{\textfloatsep}{8pt plus 1.0pt minus 0.5pt} 
\usepackage[compact]{titlesec}
\titlespacing{\section}{0pt}{2ex}{1ex}
\titlespacing{\subsection}{0pt}{1ex}{0ex}
\titlespacing{\subsubsection}{0pt}{0.5ex}{0ex}


\icmltitlerunning{Learning from Delayed Outcomes via Proxies with Applications to Recommender Systems}

\begin{document}

\twocolumn[
\icmltitle{Learning from Delayed Outcomes via Proxies with Applications to Recommender Systems}



\icmlsetsymbol{equal}{*}

\begin{icmlauthorlist}
\icmlauthor{Timothy A. Mann}{equal,dm}
\icmlauthor{Sven Gowal}{equal,dm}
\icmlauthor{Andr\'as Gy\"orgy}{dm}
\icmlauthor{Ray Jiang}{dm}
\icmlauthor{Huiyi Hu}{dm}
\icmlauthor{Balaji Lakshminarayanan}{dm}
\icmlauthor{Prav Srinivasan}{dm}
\end{icmlauthorlist}
\icmlaffiliation{dm}{DeepMind, London, UK}
\icmlcorrespondingauthor{Timothy A. Mann}{timothymann@google.com}

\icmlkeywords{Recommender systems, delayed feedback, intermediate feedback}

\vskip 0.3in
]



\printAffiliationsAndNotice{\icmlEqualContribution} 

\begin{abstract}
Predicting delayed outcomes is an important problem in recommender systems (e.g., if customers will finish reading an ebook). We formalize the problem as an adversarial, delayed online learning problem and consider how a proxy for the delayed outcome (e.g., if customers read a third of the book in 24 hours) can help minimize regret, even though the proxy is not available when making a prediction. Motivated by our regret analysis, we propose two neural network architectures: Factored Forecaster (\FF) which is ideal if the proxy is informative of the outcome in hindsight, and Residual Factored Forecaster (\RFF)  that is robust to a non-informative proxy. Experiments on two real-world datasets for predicting human behavior show that \RFF outperforms both \FF and a direct forecaster that does not make use of the proxy. Our results suggest that exploiting proxies by factorization is a promising way to mitigate the impact of long delays in human-behavior prediction tasks.
\end{abstract}

\section{Introduction}

Predicting delayed outcomes is an important problem in recommender systems and online advertising. The problem is aggravated by a stream of new items for recommendation while successful recommendations are determined by a delayed outcome (e.g., on the order of weeks or months). The actual time that an outcome is delayed is less important than the number of times a forecaster is queried for a new item without receiving any feedback. For example, in a popular recommender system an outcome delayed by a week might mean that the forecaster makes millions of predictions for a popular new item without incorporating any feedback into the forecaster's model.

{\bf Motivating Example:} Consider an online marketplace for ebooks. Our problem is to recommend ebooks that a customer will finish within 90 days of its purchase. Similarly to YouTube video recommendations \citep{Davidson2010}, when a customer visits, the marketplace (1) generates a small set of ebook candidates, (2) scores each of the candidates, and (3) presents them in descending order of predicted probability of finishing the book. When an ebook is purchased, the outcome is unknown for 90 days. The approach taken by much of the prior work on learning with delay \citep{Weinberger2002,Mesterharm2005,Joulani2013,Quanrud2015} would require waiting the whole 90 days to see the outcome. However, new books are added to the marketplace every day.
If we waited 90 days before we can accurately predict the engagement probability of a new book, the marketplace will miss out on many potential sales. While generalization can help to mitigate this problem, per product memorization is important for large scale recommender systems \citep{Cheng2016}. A complementary approach could make use of a less-delayed proxy for the delayed outcome. 
For example, one day after a purchase we can define a proxy based on the furthest page reached in the ebook. Clearly, this proxy provides some information about the eventual outcome. On the other hand, we are interested in predicting the eventual outcome, not the proxy. This leads to the main question studied in this paper: How can we learn to predict delayed outcomes via a proxy?

{\bf The Problem:} We formulate learning from delayed outcomes as a full information online learning problem where the goal is to minimize regret \cite{Hazan2016}. The online learning framework is ideal for predicting human responses (e.g., recommender systems), because it does not make distributional assumptions about the data. In our motivating example, ebook candidates are not generated by a fixed distribution, since new books are being added and the popularity of books changes over time. For these reasons, we model the problem as a delayed, adversarial online learning problem with less-delayed proxies, where instances are selected by an oblivious adversary but the proxies and outcomes are sampled in a probabilistic manner conditioned on the instances. Furthermore, we assume that the number of proxy symbols and the number of outcomes are small relative to the number of instances (e.g., the number of ebooks in an online marketplace). 

{\bf Proposed Solution:} We suggest that a proxy for the delayed outcome can be exploited by breaking the prediction problem into two subproblems: (a) predicting the proxy from instances, and (b) predicting the delayed outcome from the proxy. The intuition is that the first model can be updated quickly, since the proxy is less delayed than the outcome. On the other hand, the model predicting the outcome can only be updated when outcomes are revealed but it only uses the proxy as input. Thus, it can generalize across instances. We analyze the regret of this factored approach and find that it scales with $(D + N)$ where $D$ is the outcome delay and $N$ is the total number of items. On the other hand, the regret of forecasters that ignore the proxy may scale with $(D \times N)$. Thus, proxy information can mitigate the impact of delay.

{\bf Practical Neural Implementations:} Using this intuition we propose a factored neural network-based online learner, called factored forecaster (\FF), based on two modules -- one predicts the proxies, while the other predicts the delayed outcomes. However, we found that the selected proxies are not always sufficiently informative of the outcomes in both of our experimental domains. To mitigate this problem, we introduce a second neural network-based learner, called ``factored with residual'' forecaster (\RFF), that introduces a residual correction term. We compare both of these factored algorithms to an algorithm that ignores the proxies, called the direct forecaster (\DF), on two experimental domains: (1) predicting commit activity for GitHub repositories, and (2) predicting engagement with items acquired from a popular marketplace. In both of these domains, \RFF\ outperforms both \DF\ and \FF.

\noindent {\bf Contribution:} This paper offers three main contributions:
\begin{enumerate}[wide=0pt,leftmargin=0pt]
    \item We formalize the problem of learning from delayed outcomes via proxies as a full-information online learning problem. Many prior works have analyzed learning with delayed outcomes \citep[e.g.,][]{Weinberger2002,Mesterharm2005,Joulani2013}; however, to our knowledge, this is the first work to consider using proxies to mitigate the impact of delay.
    \item We analyze the regret of learning with proxies. In particular, factorization helps when the forecaster is forced to make a burst of predictions for the same instance (i.e., predictions for a popular new item are requested many times before any outcome is revealed).
    \item Finally, we introduce a practical neural network implementation, \RFF, for exploiting proxies. Our experiments provide evidence that \RFF performs as well as a delayed learner when the proxies are not informative of the outcomes and outperforms a delayed learner when instances are chosen adversarially and the proxies are informative of the outcomes.
\end{enumerate}

\section{Formal Problem Description \& Approach} 
\label{sec:formal}

The formal setting in this paper is motivated by online marketplaces where we want to predict consumer engagement with purchased products. An instance $x \in \X$ represents features about a user and a content being considered for recommendation, while a label $y \in \Y$ indicates what kind of engagement occurred, with the simplest being $\Y = \{0, 1\}$ where $y = 0$ represents ``no engagement'' and $y = 1$ represents ``successful engagement''. A proxy $z \in \Z$ represents an observation made after the prediction, but before the outcome is revealed. 
Typically, the number of proxy symbols will be much smaller than the number of instances.

Above, $\X$, $\Y$, $\Z$ are nonempty finite sets representing the sets of potential instances, outcomes, and proxies, respectively.
 We denote the number of potential instances $|\X|$ by $N$, and the $(d-1)$-dimensional simplex by $\Delta^d$ for any $d \geq 2$. Let $T > 0$ denote the number of prediction rounds, and for simplicity we assume that every outcome is observed with a fixed delay $D \ge 0$ ($D=0$ implies no delay). Before an episode begins, the environment generates $(x_t, y_t, z_t) \in \X \times \Y \times \Z$ for each round $t \in \{1, 2, \dots, T\}$.
\begin{assume} \label{asm:almost_adversarial}
Let $g(\cdot|x)$ be a conditional probability distribution over $\Z$ for all $x \in \X$, and $h(\cdot|z)$ be a conditional probability distribution over $\Y$ for all $z \in \Z$. The sequence $\langle x_t \rangle_{t=1}^T$ is generated arbitrarily. The sequence $\langle z_t \rangle_{t=1}^T$ is generated randomly such that $z_t \sim h(\cdot| x_t)$ and 
the random variables $\langle z_t \rangle_{t=1}^T$ are conditionally independent given $\langle x_t \rangle_{t=1}^T$.
Furthermore, the sequence $\langle y_t \rangle_{t=1}^T$ is generated randomly such that $y_t \sim g(\cdot| z_t)$ and
the random variables $\langle y_t \rangle_{t=1}^T$ are conditionally independent given $\langle z_t \rangle_{t=1}^T$.
\end{assume}
That is, we make no assumption about how the instances are selected, while the proxies and outcomes are generated stochastically from the unknown distributions $h$ and $g$.

\begin{figure}
    \centering
    \includegraphics[width=0.45\textwidth]{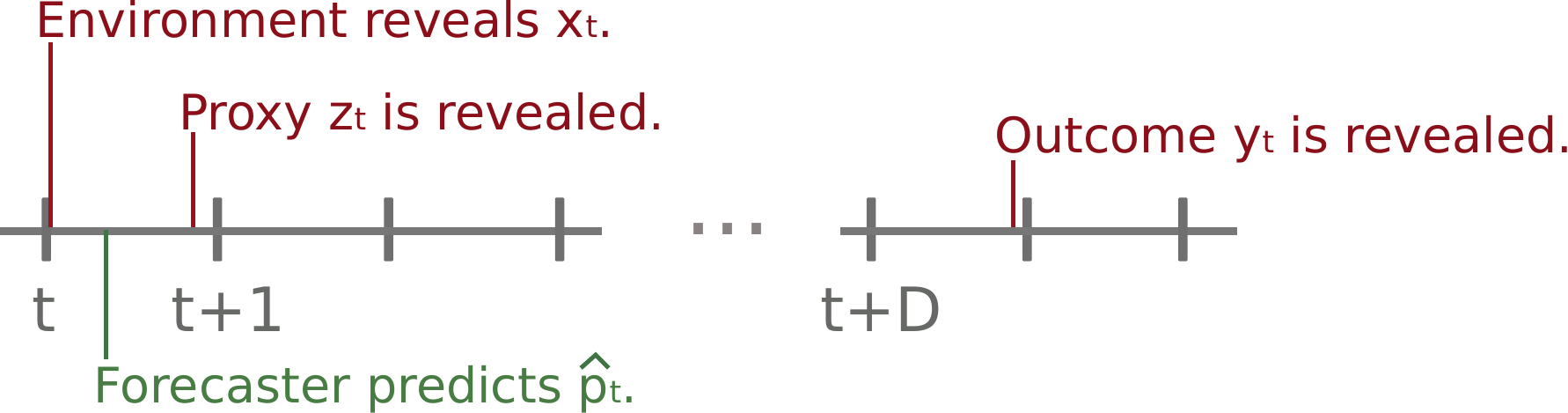}
    \caption{A timeline depicting prediction of a delayed outcome. At the beginning of round $t$, the environment reveals the instance $x_t$. Next the forecaster makes a prediction $\hat{p}_t$ given $x_t$. At the end of round $t$, the proxy $z_t$ is revealed only after the forecaster makes a prediction. Finally, the outcome $y_t$ is revealed at the end of round $t+D$ (after prediction $\hat{p}_{t+D}$ is made).}
    \label{fig:delayed_prediction_timeline}
\end{figure}

In our theoretical analysis, for clarity, we consider a setup where each outcome is observed with a fixed delay $D$, while the proxies are revealed with no delay, as described by the following model:%
\footnote{We consider delayed proxies in our experiments.}
At each round $t \geq 1$, using the instance $x_t$, the forecaster makes a prediction $\hat{p}_t \in \mathcal{P}=\{ \phi \mid \phi : \X \rightarrow \Delta^{|\Y|}\}$ and incurs instantaneous loss $f_t(\hat{p}_t) = -\ln \left( \hat{p}_t(y_t|x_t) \right)$. At the end of round $t$, the forecaster receives a proxy symbol $z_t$ and a delayed outcome $y_{t-D}$ (if $t - D > 0$).
Figure~\ref{fig:delayed_prediction_timeline} depicts a single prediction round with a delayed outcome. The goal of the forecaster is to minimize the (expected) regret 
\begin{align}
    \E[\regret_T] &= \min_{\omega \in \mathcal{P}} \E\left[\sum_{t=1}^T f_t(\hat{p}_t) - f_t(\omega)\right] \label{eqn:regret}
\end{align}
over $T$ rounds, where the expectation is taken with respect to the proxies $\langle z_t \rangle_{t=1}^T$ and outcomes $\langle y_t \rangle_{t=1}^T$ (note that $\mathcal{P}$ is compact, so the minimum exists). Under Assumption~\ref{asm:almost_adversarial},
the optimal predictor is defined by
\[
p^*_{\rm F}(y|x) = \sum_{z \in \Z} g(y|z)h(z|x)
\]
for all $(x, y) \in \X \times \Y$.

In the usual setups considered in the literature, the forecaster does not have access to the proxies and hence needs to deal with the delays of the outcomes directly.
\citet{Joulani2016} introduce a strategy for converting a base online learner into a delayed online learner. We refer to this approach or any other approach that learns a direct relationship between instances and outcomes as a {\it direct forecaster} for delayed outcomes, since it does not make use of the proxy outcomes. 

According to \citet{Joulani2013}, the regret in adversarial online learning with delay $D$ is at least $\Omega \left( (D+1)N \ln \left( \frac{T}{N(D+1)} \right) \right)$. Applying the approach of \citet{Joulani2016}, it is not difficult to prove that there exists a forecaster with regret $O\left((D+1) N |\Y| \ln \left( \frac{T}{N} + 1 \right) \right)$ (see the supplementary material for a formal statement and proof). Notice that both the lower and upper bounds are logarithmic in the number of rounds $T$ but scale multiplicatively (i.e., $D \times N$) with the delay $D$ and the number of instances $N$. However, this bound is particular to the adversarial setting. If the instances are chosen at each round according to a fixed distribution, then the regret bound scales additively (i.e., $D + N$) \cite{Joulani2013}. Unfortunately, an online marketplace with a stream of new items resembles adversarially generated instances more than independent and identically distributed instances. Our goal is to try to recover an additive scaling between instances and delay even when instances are selected adversarially.

{\bf Proposed Approach:} Given the model described by Assumption~\ref{asm:almost_adversarial}, it is natural to consider a factored model
\begin{align}
    \hat{p}_t(y|x_t) &= \sum_{z \in \Z} \hat{g}_t(y|z) \hat{h}_t(z|x_t) \enspace , \label{eqn:factored_forecaster}
\end{align}
where $\hat{h}$ approximates the conditional distribution $h$ that generates the proxies and $\hat{g}$ estimates the conditional probability $g$ of the delayed outcome given a proxy. The key advantage of a factored model is that the estimator $\hat{h}$ can be updated at each round based on the proxy. While the estimator $\hat{g}$ can only be updated when a delayed outcome is revealed, its predictions do not depend on specific instances. Note that under Assumption~\ref{asm:almost_adversarial}, the optimal predictor $p^*_{\rm F}$ has a factored form.

For simplicity, below we assume that $\hat{g}$ and $\hat{h}$ are Laplace estimators given the outcomes: for $t \ge 1$, given a sequence of observations $\langle a_1,\ldots,a_{t-1}\rangle$ taking values in a finite set $\A$, for the $t$th symbol the Laplace estimator assigns probability $q_t(a)=\frac{\sum_{s=1}^{t-1} \mathbb{I}\{a_s=a\}+\alpha}{t-1+|\A| \alpha}$ for all $a \in \A$ with $\alpha=1$ \citep{Cesa2006}.\footnote{A somewhat better, asymptotically optimal estimator can be obtained by selecting $\alpha=1/2$; this is known as the Krichevsky-Trofimov estimator \citep{Cesa2006}. While our results can easily be applied to this estimator, we choose to present them with the Laplace estimator because it simplifies the form of our bounds.} The resulting algorithm is shown in Algorithm~\ref{alg:dff}, with its regret analyzed in the next theorem.

\begin{algorithm}
\caption{Factored Forecaster for Delayed Outcomes}
\label{alg:dff}
\begin{algorithmic}[1]
\REQUIRE $\X$ \COMMENT{Instance space.}, $\Y$ \COMMENT{Outcome space.}, $\Z$ \COMMENT{Proxy space.}, $D \geq 0$ \COMMENT{Delay in number of rounds.}, $\base$ \COMMENT{Base forecaster.}, $T \geq 1$ \COMMENT{Total number of rounds.}
\STATE Before the game, the environment selects: \\
\begin{enumerate}
    \item instances $\langle x_1, x_2, \dots, x_T \rangle \in \X^T$ arbitrarily;
    \item proxies $\langle z_1, z_2, \dots, z_T \rangle \in \Z^T$ \\ independently with $z_t \sim h(\cdot|x_t)$; and
    \item outcomes $\langle y_1, y_2, \dots, y_T \rangle \in \Y^T$ \\ independently with $y_t \sim g(\cdot|z_t)$.
\end{enumerate}
\STATE For each $x \in \X$, initialize \base{} estimator $\hat{h}^{(x)}$.
\STATE For each $z \in \Z$, initialize \base{} estimator $\hat{g}^{(z)}$.
\FOR{$t = 1, 2, \dots, T$}
\STATE Environment reveals $x_t$.
\STATE Predict $\hat{p}_t$ where \\ $\forall_{y \in \Y} \enspace , \hat{p}_t(y) = \sum_{z \in \Z} \hat{g}^{(z)}(y) \hat{h}^{(x_t)}(z)$.
\STATE Incur loss $f_t(\hat{p}_t) = -\ln \left( \hat{p}_t(y_t) \right)$.
\STATE Environment reveals $z_t$ and $y_{t-D}$ (if $t-D > 0$).
\STATE Update $\hat{h}^{(x_t)}$ with $z_t$ and $\hat{g}^{(z^{t-D})}$ with $y_{t-D}$ using \base (if $t-D > 0$).
\ENDFOR
\end{algorithmic}
\end{algorithm}

\begin{theorem} \label{thm:dff_regret}
Suppose that Assumption \ref{asm:almost_adversarial} holds. Let $D \geq 0$ be the delay applied to outcomes. Then the expected regret of Algorithm~\ref{alg:dff} with a Laplace estimator as \base{} satisfies
\begin{align*}
    \mathbb{E}&\left[ \regret_T \right] \leq \nonumber \\
    O &\bigg( \underbrace{(D+1) |\Y||\Z| \ln \left( \frac{T}{|\Z|} \right)}_{(a)} + \underbrace{|\Z| N \ln \left( \frac{T}{N} \right)}_{(b)} \bigg)~.
\end{align*}
\end{theorem}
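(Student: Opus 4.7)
My plan is to turn the regret into a sum of expected KL divergences, apply the KL chain rule and data-processing inequality to split each summand into an estimation error for $\hat{h}^{(x_t)}$ and a (weighted) estimation error for $\hat{g}^{(z)}$, and then bound each of the two resulting sums by the log-loss redundancy of the Laplace estimator (with a delay correction for the $\hat{g}^{(z)}$ estimators). As a starting point, under Assumption~\ref{asm:almost_adversarial} the pointwise minimizer of $\E[f_t(\omega)\mid x_t]$ over $\omega\in\mathcal{P}$ is $\omega = p^*_{\rm F}$, so
\[
\E[\regret_T] \;=\; \sum_{t=1}^{T}\E\!\left[\mathrm{KL}\!\left(p^*_{\rm F}(\cdot|x_t)\,\|\,\hat{p}_t(\cdot|x_t)\right)\right].
\]

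Next, for each $t$, I introduce the joint distributions $P_t(z,y) = h(z|x_t)g(y|z)$ and $Q_t(z,y) = \hat{h}^{(x_t)}_t(z)\,\hat{g}^{(z)}_t(y)$ on $\Z\times\Y$, whose $y$-marginals are exactly $p^*_{\rm F}(\cdot|x_t)$ and $\hat{p}_t(\cdot|x_t)$. Data processing plus the KL chain rule then give
\[
\mathrm{KL}\!\left(p^*_{\rm F}(\cdot|x_t)\,\|\,\hat{p}_t(\cdot|x_t)\right) \;\le\; \mathrm{KL}\!\left(h(\cdot|x_t)\,\|\,\hat{h}^{(x_t)}_t\right) \;+\; \sum_{z\in\Z} h(z|x_t)\,\mathrm{KL}\!\left(g(\cdot|z)\,\|\,\hat{g}^{(z)}_t\right).
\]
Term (b) follows from the first piece: grouping rounds by $x\in\X$, each $\hat{h}^{(x)}$ processes $n_x:=|\{t:x_t=x\}|$ \emph{undelayed} i.i.d.\ samples from $h(\cdot|x)$, so the standard Laplace log-loss redundancy contributes $O(|\Z|\ln(n_x/|\Z|))$; summing over $x$ and applying Jensen's inequality to $\sum_x n_x = T$ yields $O(|\Z|N\ln(T/N))$.

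Term (a) comes from the second piece. For each $z$, the sub-sequence of outcomes fed to $\hat{g}^{(z)}$ consists of i.i.d.\ draws from $g(\cdot|z)$ revealed with delay $D$. I would use a Joulani-style reduction (equivalently, running $D{+}1$ parallel undelayed Laplace learners) to bound the delayed log-loss regret by $(D{+}1)$ times the undelayed Laplace regret on effective sample size $W_z/(D{+}1)$, where $W_z = \sum_t h(z|x_t)$ is the expected usage of estimator $z$. Summing over $z$ and applying Jensen to $\sum_z W_z = T$ produces the $(D{+}1)|\Y||\Z|\ln(T/|\Z|)$ contribution.

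The main obstacle will be this last step: the weight on the $z$-th KL term in the chain-rule bound is the deterministic $h(z|x_t)$, while the quality of $\hat{g}^{(z)}_t$ depends on the random count $|\{s\le t-D-1 : z_s = z\}|$ of past proxy realizations. Reconciling these via a conditional-expectation argument (so that Jensen can be applied to the expected usages $W_z$ rather than empirical counts), and then inserting the $(D{+}1)$ delay factor, is the only place where delicate bookkeeping is required; the remaining ingredients are standard Laplace redundancy and Jensen.
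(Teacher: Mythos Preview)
Your decomposition is essentially the paper's: data processing plus the KL chain rule is exactly what the log-sum inequality yields after taking the expectation over $y_t$, and the paper's Lemma~\ref{lem:factored_regret} arrives at the equivalent bound
\[
\E[\regret_T]\;\le\;\E\Big[\sum_{t=1}^T \ln\frac{g(y_t|z_t)}{\hat{g}_t(y_t|z_t)}\Big]\;+\;\E\Big[\sum_{t=1}^T \ln\frac{h(z_t|x_t)}{\hat{h}_t(z_t|x_t)}\Big],
\]
with term~(b) then handled just as you propose (group by $x$, Laplace redundancy, Jensen).

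The one substantive difference is precisely at the point you flag as the main obstacle, and the paper shows that obstacle is illusory. It never works with the deterministic weights $h(z|x_t)$ or the expected usages $W_z$; it stays in the realized-$z_t$ form above and bounds $\sum_t \ln\big(g(y_t|z_t)/\hat{g}_t(y_t|z_t)\big)$ \emph{pointwise} by the adversarial delayed-regret bound for the direct forecaster (Theorem~\ref{thm:df_regret_bound} with $\X\leftarrow\Z$, $\Y\leftarrow\Y$, $D\leftarrow D$). Since that bound holds for \emph{every} realization of $\langle z_t,y_t\rangle_{t=1}^T$, it holds in expectation as well, and Jensen over $z\in\Z$ on the realized counts (which sum to $T$) gives $(D+1)|\Y||\Z|\ln(T/|\Z|)$ immediately. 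In other words, the ``conditional-expectation argument'' you anticipate should be run in the opposite direction---replace $h(z|x_t)$ by $\mathbb{I}\{z_t=z\}$ rather than the reverse---after which no reconciliation with $W_z$ is needed and the ``delicate bookkeeping'' collapses to a single invocation of an already-proved worst-case result.
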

The proof of the theorem is presented in the supplementary material.
There are three terms in the bound of Theorem~\ref{thm:dff_regret}. The first term, (a), depends on the delay $D$ but not on the number of instances $N$ (instead of $N$ it depends on $|\Z|)$. The second term, (b), depends on $N$ but not the delay. Thus, the regret scales additively with respect to the delay and the number of instances, while the regret of the direct forecaster scales multiplicatively (i.e., $\Omega(DN)$).

\begin{figure}
    \centering
    \includegraphics[width=0.45\textwidth]{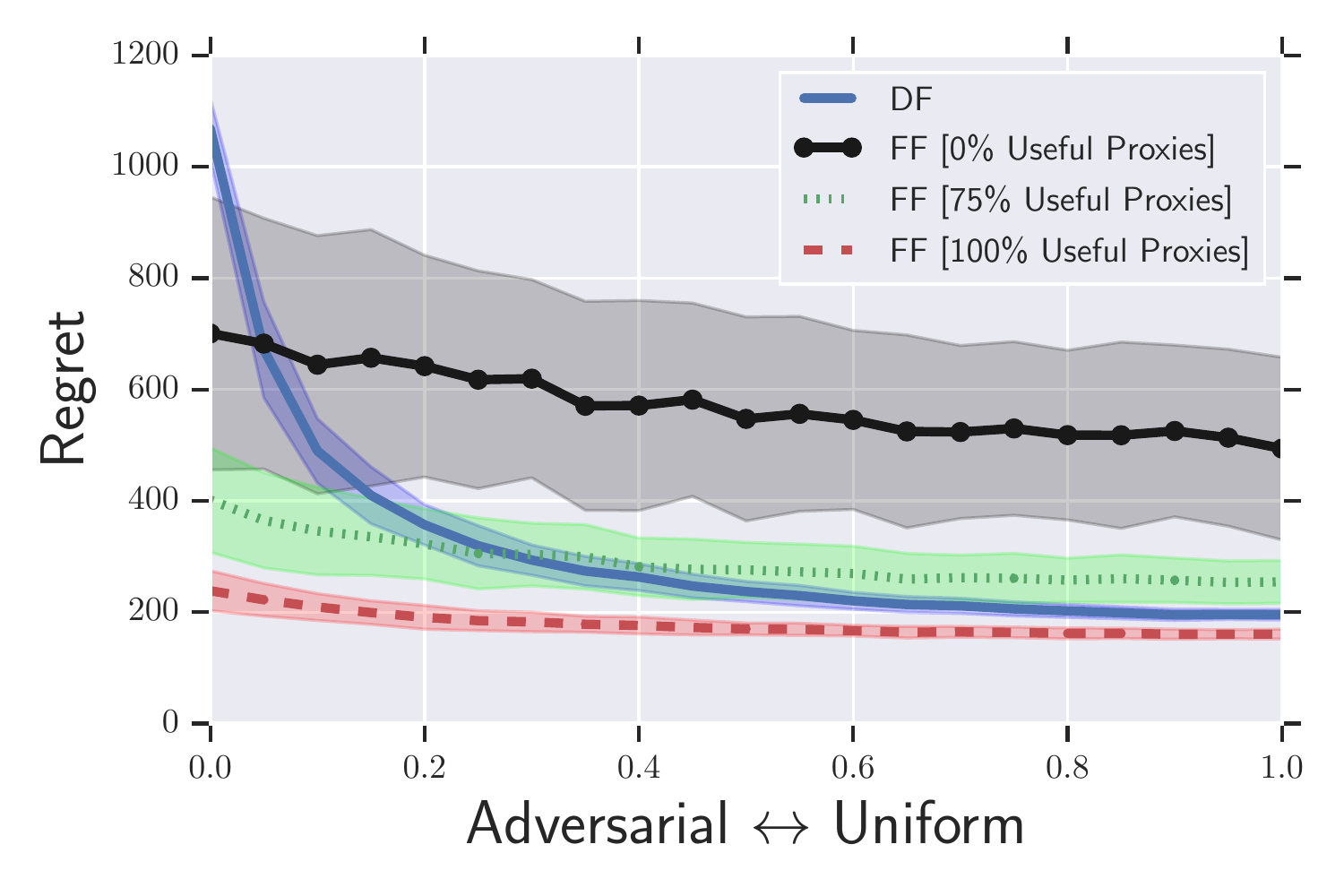}
    \vspace{-0.2cm}
    \caption{Comparison of regret (averaged over 200 trials) on randomly generated prediction tasks for a direct forecaster (DF) and factored forecasters (FFs) as the instances are generated from an adversarial schedule (towards the left) to a uniform schedule (towards the right). Shaded regions denote $\pm1$ standard deviation.}
    \vspace{0.5cm}
    \label{fig:toy_regret}
\end{figure}

Figure \ref{fig:toy_regret} compares the regret of a direct forecaster (DF) and factored forecasters (FFs) in a synthetic prediction domain where the outcomes are generated by an unknown factored model. In this task, $T = 1000$ and $D = 100$. Details of the prediction domain are given in the supplementary material. Towards the left-hand side, the prediction task is generated according to an adversarial schedule that forces the forecaster to make consecutive predictions for the same instance. Towards the right, the schedule is gradually interleaved by a uniform distribution over instances. It is also interesting to consider the case when the factorization assumption (Assumption \ref{asm:almost_adversarial}) is violated. The three FF curves differ by the fraction of the times that the proxy signal is sampled from the true model versus uniform noise. With $100\%$ useful proxies, FF has consistent performance, while DF struggles with the adversarially generated schedule. With $0\%$ useful proxies (random noise), FF incurs large regret relative to DF in most cases. However, the case where FF is given $75\%$ useful proxies, demonstrates that FF can outperform DF in adversarial settings even when the proxy signal is diluted with noise. This suggests that the FF approach may be useful even when we cannot identify high-quality proxies.

\section{Neural Network Architectures for Learning from Delayed Outcomes}
\label{sec:nn}

Based on our analysis, we propose three neural network architectures for learning from delayed outcomes via proxies.

\noindent {\bf Direct forecaster (\DF):} We use a single neural network to predict the distribution over outcomes given an instance (Figure~\ref{fig:input_to_label}). This approach ignores the proxies.

\noindent {\bf Factored forecaster (\FF):} This approach instantiates Algorithm~\ref{alg:dff} with two neural networks (Figure~\ref{fig:input_to_feedback} and~\ref{fig:feedback_to_label}). The first (Figure~\ref{fig:input_to_feedback}) predicts a distribution over proxies given an instance (that is, it implements $\hat{h}_t$), while the second (Figure~\ref{fig:feedback_to_label}) predicts an outcome distribution given a proxy (implementing $\hat{g}_t$). The final prediction is obtained according to \eqref{eqn:factored_forecaster}.

\noindent {\bf Factored with Residual forecaster (\RFF):} Similarly to the factored approach, this method learns two neural networks (Figure~\ref{fig:input_to_feedback} and~\ref{fig:residual}). However, the neural network that predicts an outcome distribution has two towers. The first tower only uses the proxy to predict the logits (implementing $\hat{g}_t$ for the probabilities), while the second tower is an instance-dependent residual correction that can help to correct predictions when the proxies are not informative of the delayed outcomes. Denoting the coordinates of the exponentiated $\Delta$-logits by $1+\delta_t(y|x_t)$ (where $\delta_t$ is typically close to 0), the final prediction is obtained as
$\hat{p}_t(y|x_t)=\sum_{z \in \Z} (1+\delta_t(y|x_t,z)) \hat{g}_t(y|z)\hat{h}_t(z|x_t)$.
We train the residual tower with a separate loss and stop backpropogation from that loss to the first tower. This ensures that the second tower is treated as a residual correction helping to preserve generalization across instances.

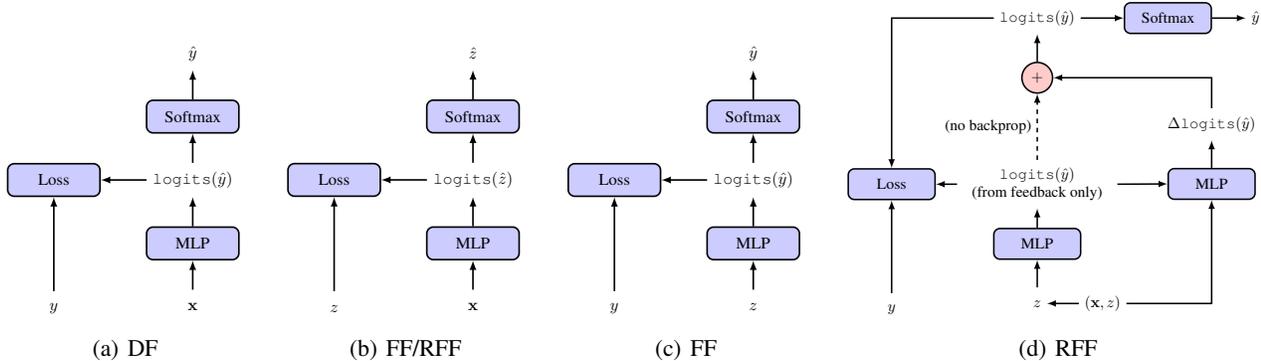
\begin{figure*}
    \centering
    \subfigure[][\DF]{\resizebox{0.19\textwidth}{!}{
        \begin{tikzpicture}

\node (input) [inp] {$\mathbf{x}$};
\path (input.north)+(0, 1) node (net) [mlp] {MLP};
\path (net.north)+(0, 1) node (logitz) [inp] {$\texttt{logits}(\hat{y})$};
\path (logitz.north)+(0, 1) node (softmax) [mlp] {Softmax};
\path (softmax.north)+(0, 1) node (predz) [inp] {$\hat{y}$};
\path (logitz.west)+(-2, 0) node (loss) [mlp] {Loss};
\path (loss.south)+(0, -2.4) node (z) [inp] {$y$};

\path [draw, ->] (input.north) -- node [above] {} (net.south);
\path [draw, ->] (net.north) -- node [above] {} (logitz.south);
\path [draw, ->] (logitz.west) -- node [above] {} (loss.east);
\path [draw, ->] (logitz.north) -- node [above] {} (softmax.south);
\path [draw, ->] (softmax.north) -- node [above] {} (predz.south);
\path [draw, ->] (z.north) -- node [above] {} (loss.south);

\end{tikzpicture}
        \label{fig:input_to_label}}}
        \;\;\;
    \subfigure[][\FF/\RFF]{\resizebox{0.19\textwidth}{!}{
        \begin{tikzpicture}

\node (input) [inp] {$\mathbf{x}$};
\path (input.north)+(0, 1) node (net) [mlp] {MLP};
\path (net.north)+(0, 1) node (logitz) [inp] {$\texttt{logits}(\hat{z})$};
\path (logitz.north)+(0, 1) node (softmax) [mlp] {Softmax};
\path (softmax.north)+(0, 1) node (predz) [inp] {$\hat{z}$};
\path (logitz.west)+(-2, 0) node (loss) [mlp] {Loss};
\path (loss.south)+(0, -2.4) node (z) [inp] {$z$};

\path [draw, ->] (input.north) -- node [above] {} (net.south);
\path [draw, ->] (net.north) -- node [above] {} (logitz.south);
\path [draw, ->] (logitz.west) -- node [above] {} (loss.east);
\path [draw, ->] (logitz.north) -- node [above] {} (softmax.south);
\path [draw, ->] (softmax.north) -- node [above] {} (predz.south);
\path [draw, ->] (z.north) -- node [above] {} (loss.south);

\end{tikzpicture}
        \label{fig:input_to_feedback}}}
        \;\;\;
    \subfigure[][\FF]{\resizebox{0.19\textwidth}{!}{
        \begin{tikzpicture}

\node (input) [inp] {$z$};
\path (input.north)+(0, 1) node (net) [mlp] {MLP};
\path (net.north)+(0, 1) node (logitz) [inp] {$\texttt{logits}(\hat{y})$};
\path (logitz.north)+(0, 1) node (softmax) [mlp] {Softmax};
\path (softmax.north)+(0, 1) node (predz) [inp] {$\hat{y}$};
\path (logitz.west)+(-2, 0) node (loss) [mlp] {Loss};
\path (loss.south)+(0, -2.4) node (z) [inp] {$y$};

\path [draw, ->] (input.north) -- node [above] {} (net.south);
\path [draw, ->] (net.north) -- node [above] {} (logitz.south);
\path [draw, ->] (logitz.west) -- node [above] {} (loss.east);
\path [draw, ->] (logitz.north) -- node [above] {} (softmax.south);
\path [draw, ->] (softmax.north) -- node [above] {} (predz.south);
\path [draw, ->] (z.north) -- node [above] {} (loss.south);

\end{tikzpicture}
        \label{fig:feedback_to_label}}}
        \;\;\;
    \subfigure[][\RFF]{\resizebox{0.3325\textwidth}{!}{
        \begin{tikzpicture}

\node (input) [inp] {$(\mathbf{x}, z)$};
\path (input)+(-1.5, 0) node (z) [inp] {$z$};
\path (z.north)+(0, 1) node (feedbacknet) [mlp] {MLP};
\path (feedbacknet.north)+(0, 1) node (logity) [inp] {
  \begin{tabular}{c}
    $\texttt{logits}(\hat{y})$\\
    (from feedback only)
  \end{tabular}
};

\path (input.north)+(2.5, 2.375) node (residual) [mlp] {MLP};
\path (residual.north)+(0, 1) node (corry) [inp] {$\Delta\texttt{logits}(\hat{y})$};
\path (logity.west)+(-1.5, 0) node (loss) [mlp] {Loss};
\path (loss.south)+(0, -2.5) node (y) [inp] {$y$};
\path (logity.south)+(0, 3) node (add) [op] {$+$};
\path (add.north)+(0, 1) node (logityfull) [inp] {$\texttt{logits}(\hat{y})$};
\path (logityfull.east)+(2, 0) node (softmax) [mlp] {Softmax};
\path (softmax.east)+(1, 0) node (predy) [inp] {$\hat{y}$};

\path [draw, ->] (input.west) --  (z.east);
\path [draw, ->] (z.north) -- node [above] {} (feedbacknet.south);
\path [draw, ->] (feedbacknet.north) -- node [above] {} (logity.south);
\path [draw, ->] (logity.west) -- node [above] {} (loss.east);
\path [draw, ->] (input.east) -| node [above] {} (residual.south);
\path [draw, ->] (residual.north) -- node [above] {} (corry.south);
\path [draw, ->] (y.north) -- node [above] {} (loss.south);
\path [draw, ->, dashed] (logity.north) -- node [left] {(no backprop)} (add.south);
\path [draw, ->] (corry.north) |- node [above] {} (add.east);

\path [draw, ->] (add.north) -- node [above] {} (logityfull.south);
\path [draw, ->] (logityfull.east) --   (softmax.west);
\path [draw, ->] (softmax.east) --   (predy.west);
\path [draw, ->] (logityfull.west) -| node [above] {} (loss.north);
\path [draw, ->] (logity.east) --  (residual.west);

\end{tikzpicture}
        \label{fig:residual}}}
    \caption{Neural network architectures for delayed prediction, where $x$ denotes an instance, $y$ denotes an outcome, $\hat{y}$ denotes a predicted outcome distribution, $z$ denotes a proxy, and $\hat{z}$ denotes a predicted distribution over proxies. The direct model~\subref{fig:input_to_label} directly predicts $\hat{y}$ an outcome distribution from an instance $x$. The factored model is defined by two neural networks~\subref{fig:input_to_feedback} and~\subref{fig:feedback_to_label}, where ~\subref{fig:input_to_feedback} predicts $\hat{z}$ the proxy distribution given an instance $x$ and~\subref{fig:feedback_to_label} predicts $\hat{y}$ an outcome distribution from an proxy $z$. Finally, the factored with residual model is defined by two networks~\subref{fig:input_to_feedback} and~\subref{fig:residual}, where ~\subref{fig:residual} uses both an instance $x$ and proxy $z$ to predict $\hat{y}$ an outcome distribution.}
    \label{fig:delayed_arcs}
\end{figure*}

For all experiments, unless stated otherwise, we update network weights using Stochastic Gradient Descent\footnote{We also tried other optimizers but found that most were not sensitive enough to changes in the distribution over instances as they keep track of a historical average over gradients.} with a learning rate of $0.1$ minimizing the negative log-loss. Except for the networks predicting the outcome distribution from proxies (Figure~\ref{fig:feedback_to_label} and left tower in Figure~\ref{fig:residual}), all network towers have two hidden layers. Their output layer is sized appropriately (to output the correct number of class logits) and use a \texttt{softmax} activation. We apply $L_2$ regularization on the weights with a scale parameter of $0.01$. The networks predicting the outcome distribution from proxies do not contain hidden layers, they do not use regularization or bias in their output layer. Training samples are stored in a fixed-sized FIFO replay buffer from which we sample uniformly.

For the experiment explained in Section~\ref{sec:github}, the networks predicting the outcome distribution from proxies use a learning rate of $1$. Network towers have two hidden layers with 40 and 20 units. The training buffer has a size of 1,000. We start training once we have 128 examples in the buffer and perform one gradient step with a batch size of 128 every four rounds.

For the experiment explained in Section~\ref{sec:store},
the networks predicting the outcome distribution from proxies use a learning rate of $0.1$. Network towers have two hidden layers with 20 and 10 units. The training buffer has a size of 3,000. We start training once we have 500 examples in the buffer and perform 20 gradient steps with a batch size of 128 every 1,000 rounds.\footnote{To simulate less frequent updates to the networks.} 
 
In both experiments, hyperparameters such as the learning rate and the coefficient of the $L_2$ regularization were tuned on \DF first, using a simple grid search. The hyperparameters that do not relate to \DF (such as the number of hidden layers and the $L_2$ regularization coefficient of the networks predicting the proxies) are tuned in a second step using a simple grid search, as well.

\section{Experiments \& Results}
\label{sec:exp}

We compare the regret of \DF, \FF, and \RFF in two domains: (1) predicting the commit activity of GitHub repositories, and (2) predicting engagement with items acquired from a marketplace.

\subsection{GitHub Commit Activity \label{sec:github}}

\begin{figure*}
    \centering
    \includegraphics[width=0.95\textwidth]{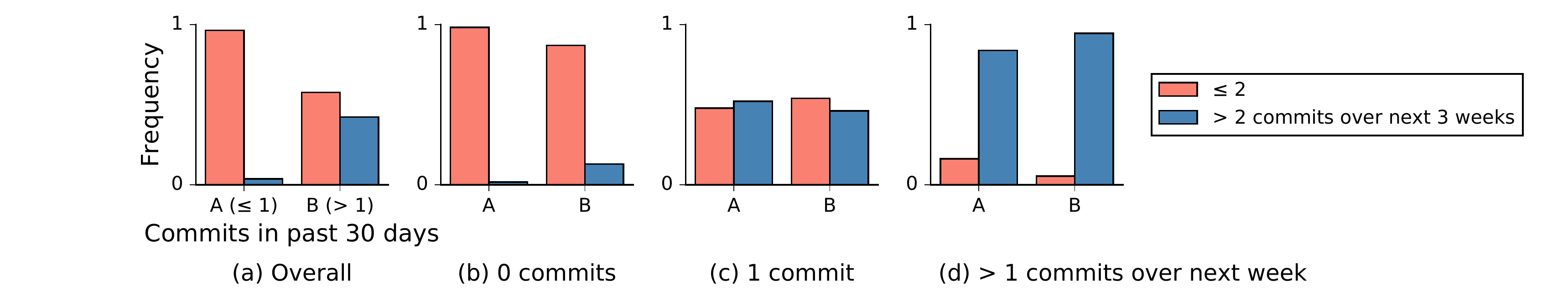}
    \caption{Comparison of outcome distributions for each proxy under different adversarial modes (A and B). (a) shows the change in overall outcome distributions between the two modes, while~(b-d) show the marginal distributions conditioned on each proxy.}
    \label{fig:label_distributions}
\end{figure*}

The goal is to predict the number of commits made to repositories from GitHub\footnote{\url{http://www.github.com}} - a popular website hosting open source software projects. Given a repository, the question we want the online learner to answer is ``will there be at least three commits in the next three weeks?''. This information could be used to predict churn rate. For example, GitHub could potentially intervene by sending a reminder email. We obtained historical information about commits to GitHub repositories from the BigQuery GitHub database.\footnote{\url{https://cloud.google.com/bigquery/public-data/github}} We started with 100,000 repositories and filtered out repositories with fewer than five unique days with commits between May 1, 2017 and January 8, 2018. This resulted in about 8,300 repositories. In our experiments, an adversary selects both a repository and a timestamp. The outcome is one if there were at least three commits over the 21 days following the chosen timestamp and zero otherwise. The proxy is based on the number of commits over the first seven days following the chosen timestamp. These were mapped to three values: (1) no commits, (2) one commit, and (3) more than one commits.
The proxy is delayed by one week and the outcomes are delayed by three weeks. We simulate a scenario where the forecaster receives one sample every 10 minutes.

The adversary initially selects repositories from a subset with a low number of commits - one or fewer commits over the past month. After four and a half weeks, the adversary switches to a distribution that samples from repositories with two or more commits within the past week. Figure~\ref{fig:label_distributions} shows the outcome probabilities for each proxy under the two distributions used by the adversary. The outcome distribution does not have a fixed relationship with the proxies. Thus, this makes the task more difficult for the factored architectures but the direct architecture is not affected.
In this experiment, the historical information about a repository defines an instance. We used binary features to represent the programming languages present in a repository as well as time bucketized counts of historical commit activity for that repository.

To calculate the regret, we subtract the loss of an (almost) optimal forecaster. Since we do not have access to an optimal forecaster, we trained two models with the same architecture as \DF on both modes used by the adversary, on non-delayed data. We trained these models using the \texttt{Adam} optimizer for 10,000 steps and using an initial learning rate of 0.0005.

Figure~\ref{fig:regret_with_shifting_adversary_loss} compares the loss of the direct and factored architectures averaged over 200 independent trials. The vertical dashed line indicates the time at which the adversary switches from its initial distribution to a distribution over high commit repositories. Due to the outcome delay, all algorithms suffer the same initial loss for roughly 3 weeks. Then all three algorithms quickly achieve a low loss until the adversary changes the distribution over repositories. Finally, {\em the factored architectures (\FF and \RFF) recover roughly 2.5 weeks more quickly than \DF} (as can be seen in Figure~\ref{fig:regret_with_shifting_adversary_loss} from week 5.5--8). Figure~\ref{fig:regret_with_shifting_adversary_regret} shows the regret for the same experiment. \FF and \RFF achieve smaller regret (than \DF) as they are better able to adapt to changing distributions. We can also clearly observe that \FF is not able to maintain as low loss as \RFF, since it is unable to correct the error coming from the slightly incorrect factorization assumption.

\begin{figure*}
    \centering
    \subfigure[][]{
        \includegraphics[width=0.4\textwidth]{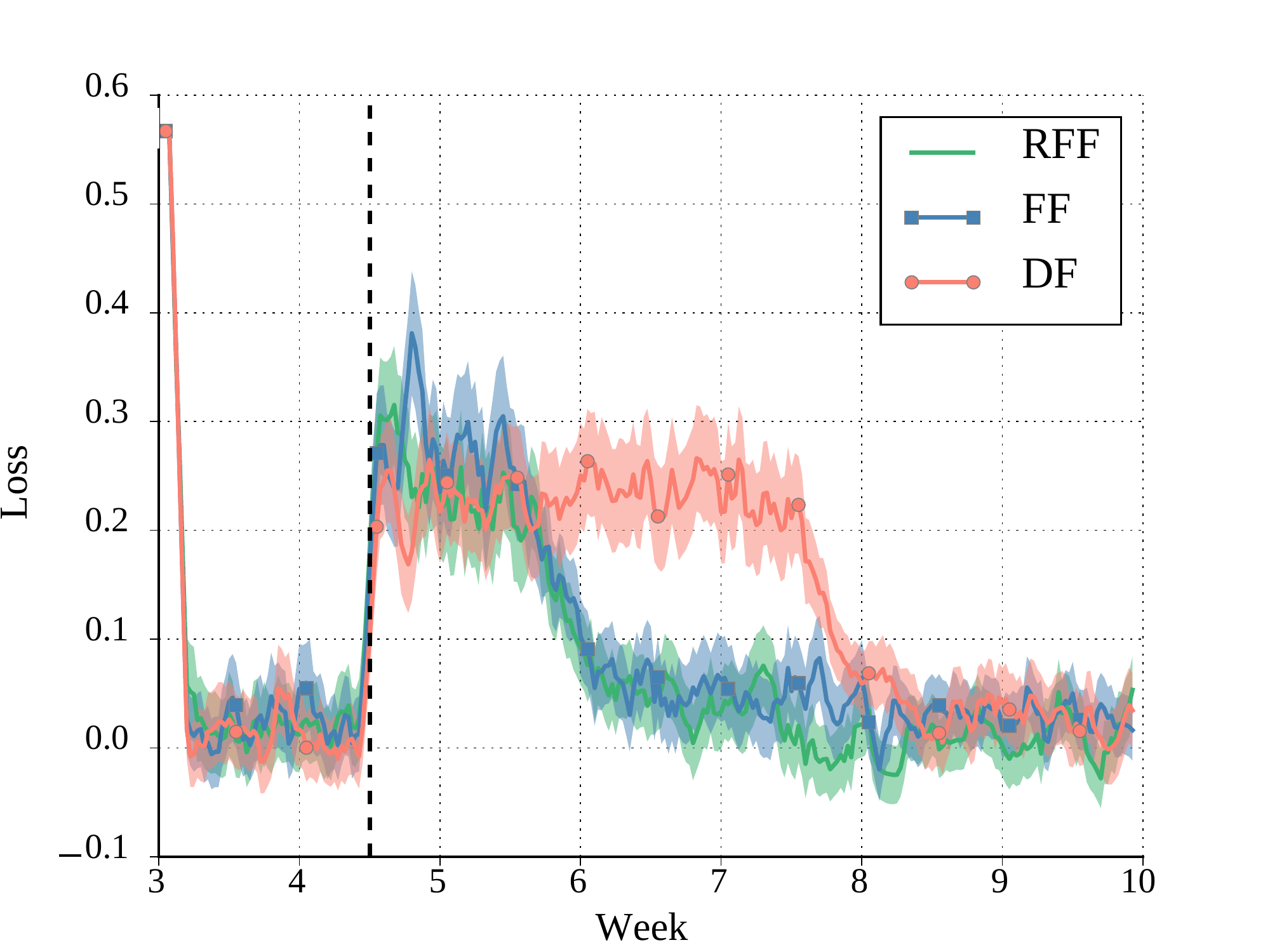}
        \label{fig:regret_with_shifting_adversary_loss}}
    \subfigure[][]{
        \includegraphics[width=0.4\textwidth]{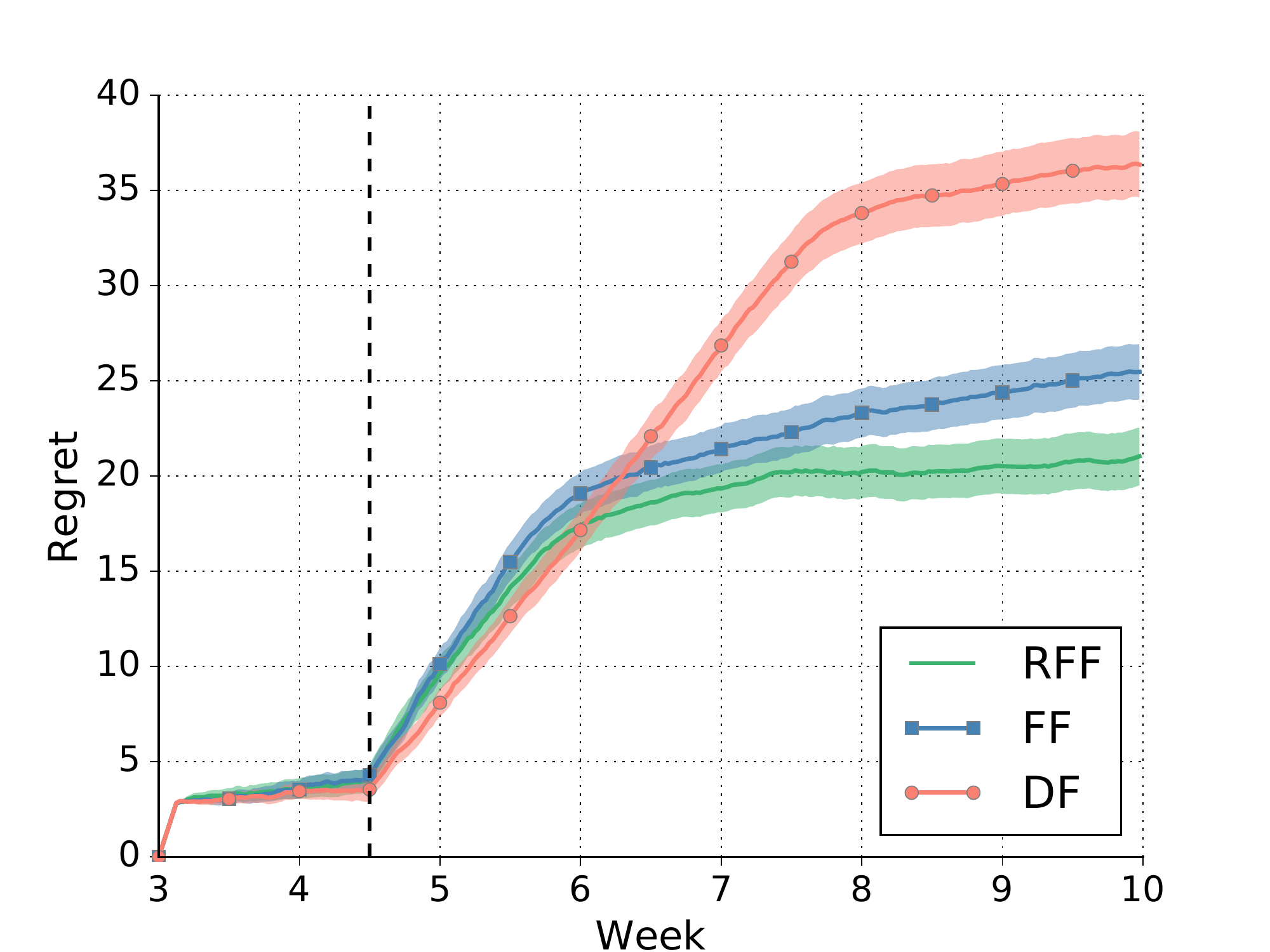}
        \label{fig:regret_with_shifting_adversary_regret}}
    \caption{Comparisons of~\subref{fig:regret_with_shifting_adversary_loss} average loss and~\subref{fig:regret_with_shifting_adversary_regret} regret (averaged over 200 independent trials with \subref{fig:regret_with_shifting_adversary_loss} further averaged over five consecutive datapoints) in a task where the adversary shifts the distribution of instances partway through the episode (indicated by the dashed vertical line). We measure regret with respect to an optimally trained model. The shaded areas represent 95\% confidence intervals.}
    \label{fig:regret_with_shifting_adversary}
\end{figure*}

\subsection{Engagement with Marketplace Items \label{sec:store}}

For a popular marketplace with personalized recommendations, we want to predict the probability that after acquiring an item a user will (1) engage with that item more than once, and (2) not delete that item within 7 days after acquiring it (e.g., a user who downloads a new ebook will read at least two chapters and not delete this ebook from their device). The proxies are measured two days after an item is acquired and the possible outcomes are: (1) {\it Deleted:} The item was deleted. (2) {\it Zero Engagements:} The user engaged with the item zero times but did not delete it. (3) {\it One Engagement:} The user engaged with the item exactly once and did not delete it. (4) {\it Many Engagements:} The user engaged with the item two or more times and did not delete it. The outcomes are the same as the proxies but measured seven days after conversion.

\begin{figure*}
\centering
\subfigure[][Staggered Schedule]{
    \includegraphics[width=0.4\textwidth]{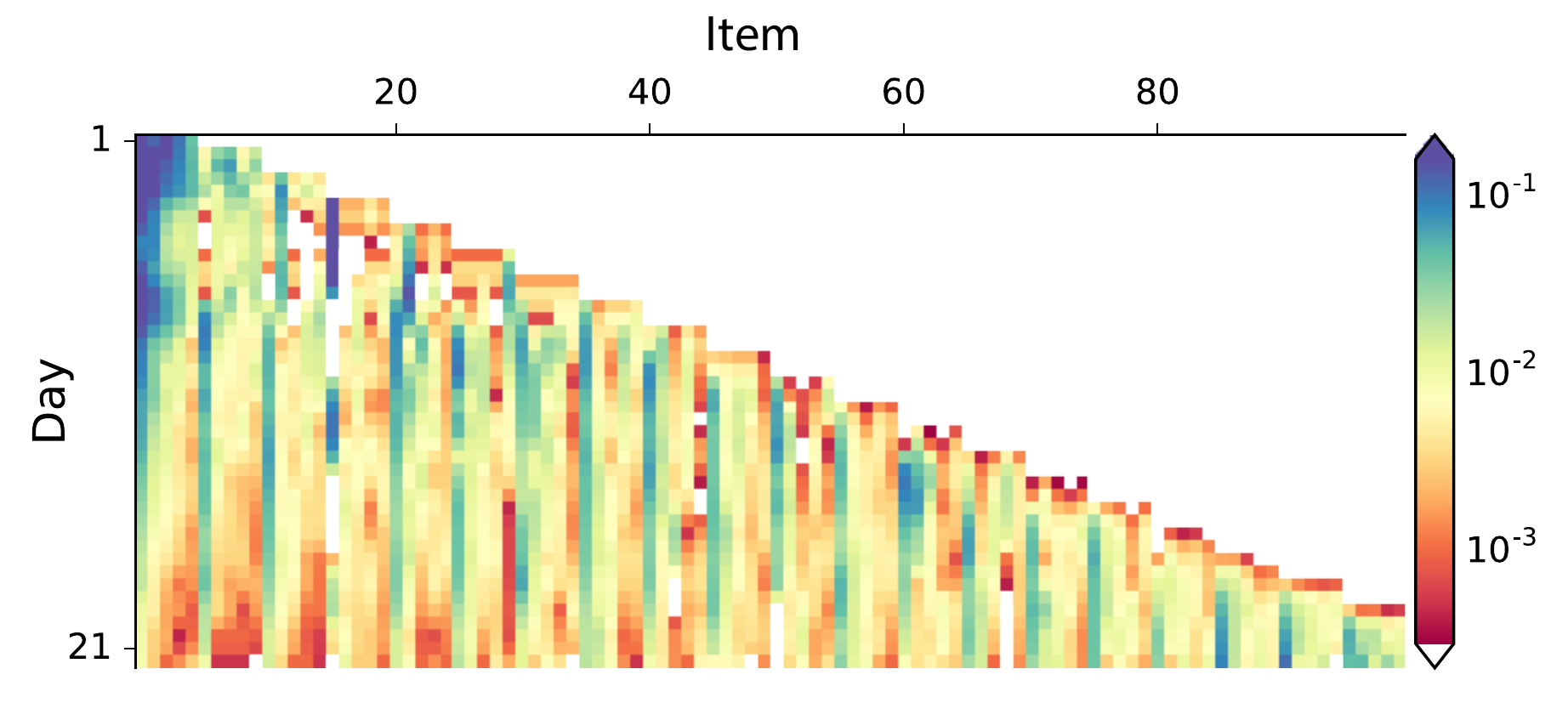}
    \label{fig:store_adversaries_staggered}}
\subfigure[][Uniform Schedule]{
    \includegraphics[width=0.4\textwidth]{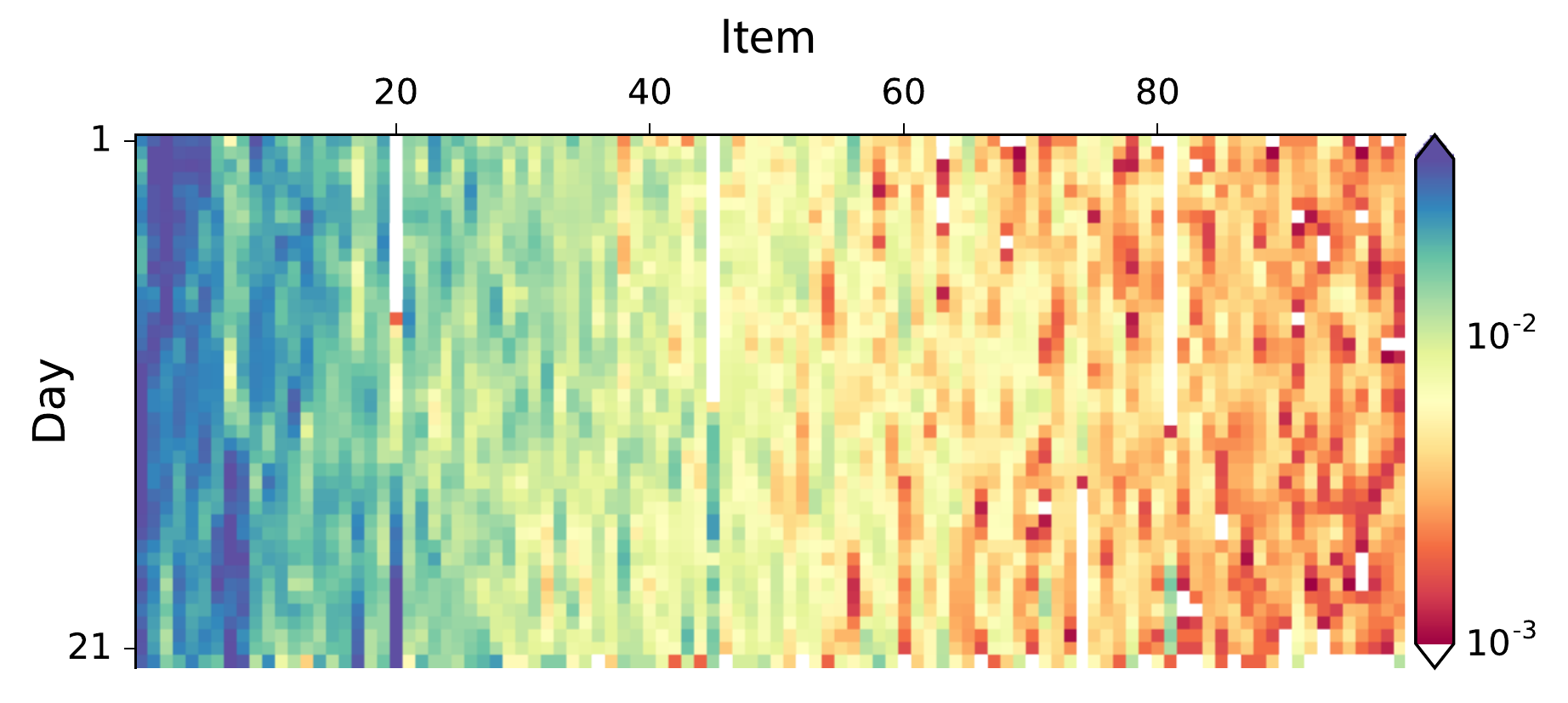}
    \label{fig:store_adversaries_uniform}}
\caption{Two different item distribution schedules derived from an actual marketplace. The X-axis corresponds to 100 item indices sorted in decreasing order of total conversions, while the Y-axis corresponds to days. \subref{fig:store_adversaries_staggered} Empirical distribution (logarithmic scale) with new items being added each day. \subref{fig:store_adversaries_uniform} More stable empirical distribution of uniformly sampled items.}
\label{fig:store_adversaries}
\end{figure*}

\begin{figure*}
\centering
\subfigure[][]{
    \includegraphics[width=0.4\textwidth]{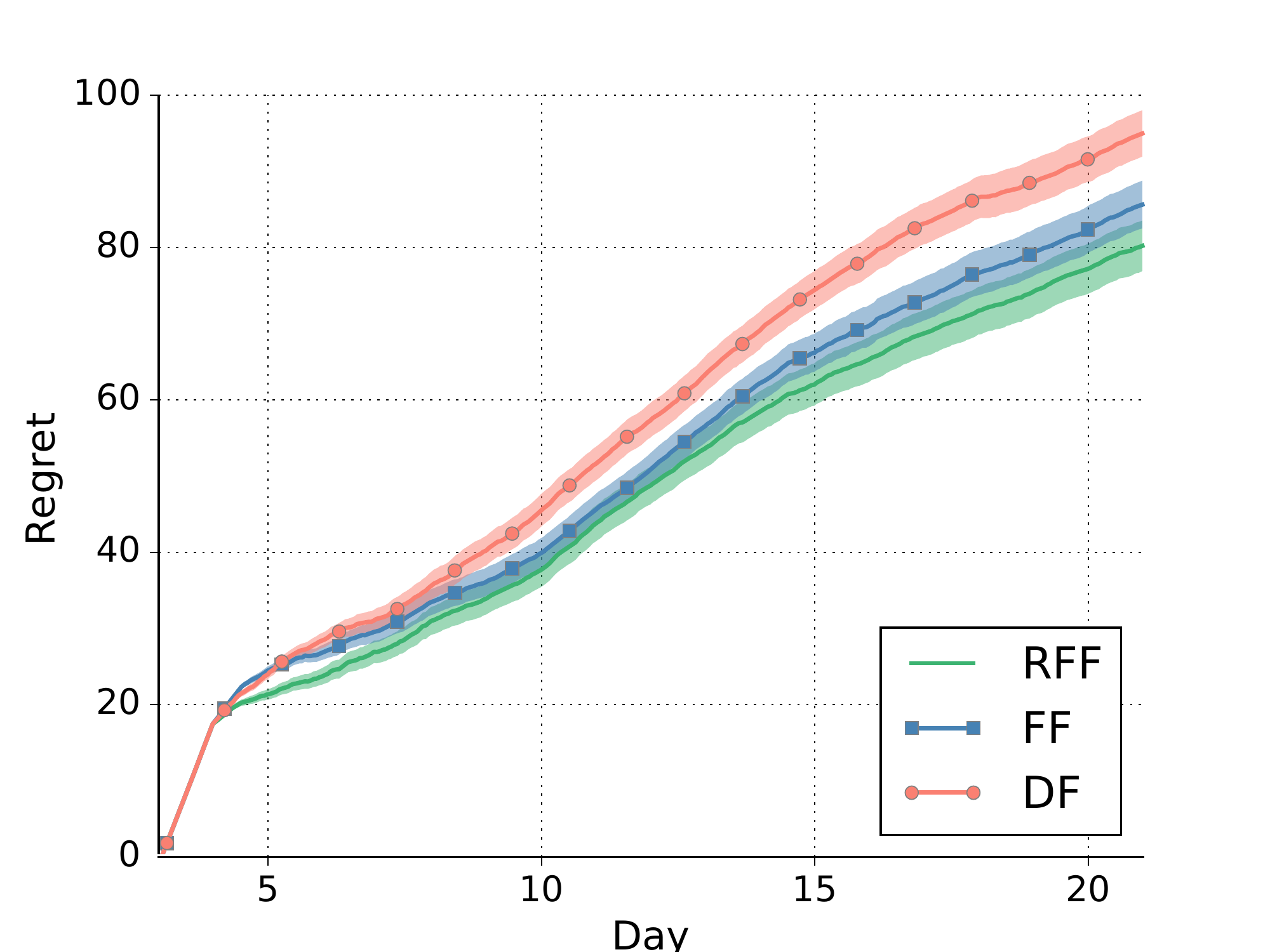}
    \label{fig:store_regret_staggered}}
\subfigure[][]{
    \includegraphics[width=0.4\textwidth]{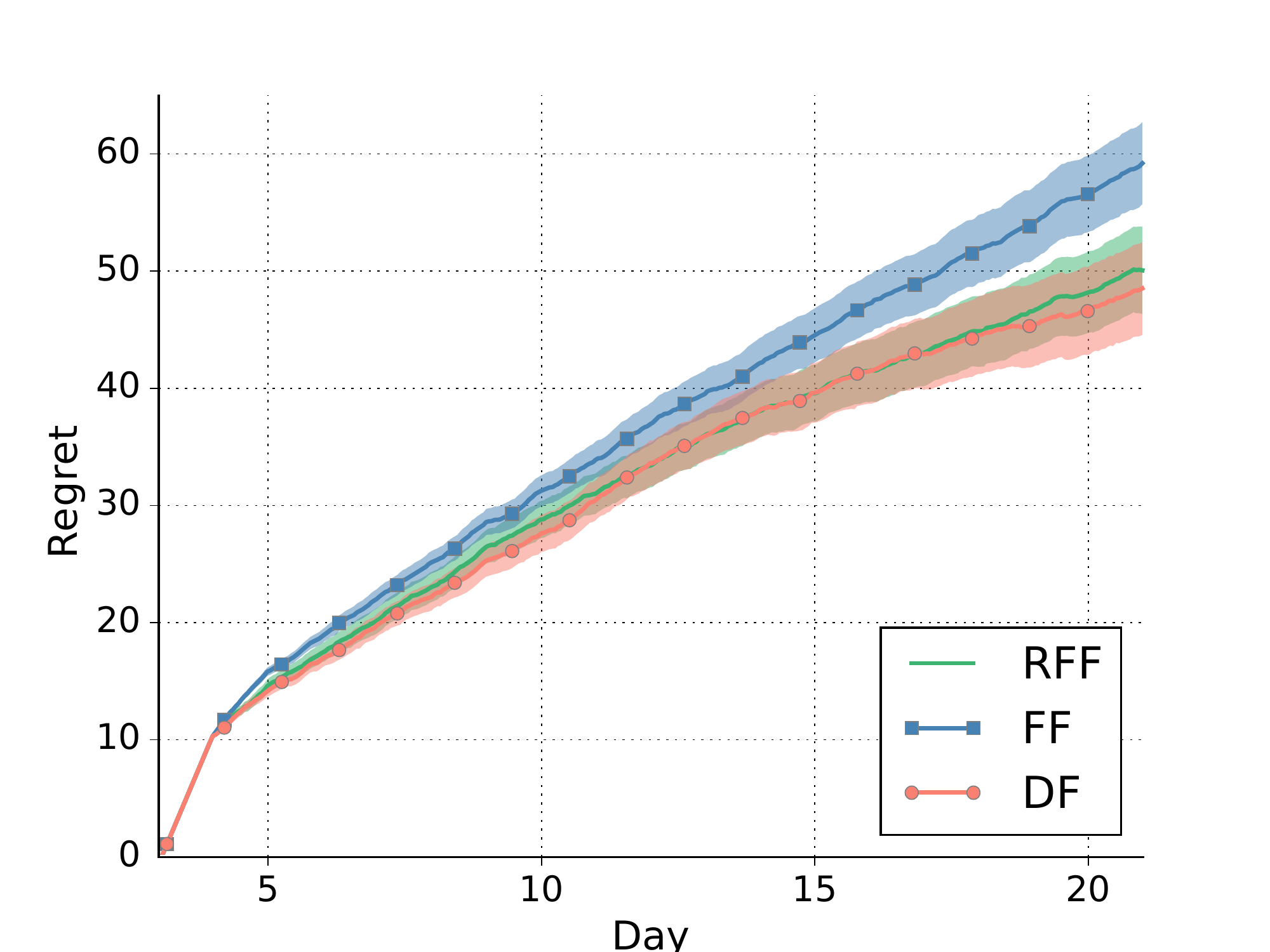}
    \label{fig:store_regret_uniform}}
\caption{Regret (and 95\% confidence intervals) of \DF, \FF, and \RFF averaged over 200 independent trials on items sampled from different schedules. \subref{fig:store_regret_staggered} In the staggered schedule, the factored approaches outperform \DF, which is crippled by delay. \subref{fig:store_regret_uniform} The uniform schedule demonstrates how \RFF can achieve similar performance to \DF, when the factorization assumption does not help. (Note the difference in the scale of the Y-axis).}
\label{fig:store_regret}
\end{figure*}

We collected 21 days of data between the 10th and 31st of January 2018. For each item with at least 100 conversions, we stored the empirical probability of each proxy and the probability of each outcome conditioned on proxies. In addition, we stored the empirical probability that an item was acquired (if shown) on each of the 21 days. We consider two distribution schedules (Figure \ref{fig:store_adversaries}). The first schedule, which we refer to as Staggered (Figure~\ref{fig:store_adversaries_staggered}), subsampled 5 items for each day (at noon) that appear for the first time on that day in the marketplace (i.e., the candidate items with higher indices only appear on increasingly later days). This schedule simulates items continually being added to the marketplace. This is a very hard setting in which we expect \DF to perform poorly, due to the need to constantly make predictions about new instances. The second schedule, which we refer to as Uniform, is derived by subsampling 100 items uniformly (Figure~\ref{fig:store_adversaries_uniform}) and creating a 21 day schedule with half-day intervals (for the afternoons, the distributions are obtained as the average distributions for the actual and the next days, based on the frequencies from the 21 logged days). This schedule is more favorable for \DF because instances are sampled according to a slowly shifting distribution.
Overall, the proxies are delayed by two days and the outcomes are delayed by seven days. We simulated a scenario where the forecaster receives one sample every 40 seconds.

In this experiment, we encoded an item/instance using a 100-dimensional one-hot encoding specifying items by index.
Similarly to the GitHub experiment, to calculate the regret, we subtract the loss of an (almost) optimal forecaster. We trained 42 models (one for each half-day interval) with the same architecture as \DF on all modes used by the adversary (one model for each half-day interval). For each model, we used the \texttt{Adam} optimizer and trained for 40,000 steps using non-delayed data, with an initial learning rate of 0.0005.

Figure \ref{fig:store_regret} compares the average regret of all three forecasters over 200 independent trials for both adversarial schedules.
For the Staggered schedule (Figure \ref{fig:store_regret_staggered}), \FF and \RFF outperform \DF, as expected. \DF must wait until the outcomes become available, but \FF and \RFF are able to generalize to new instances. \RFF performs slightly better than \FF, indicating that \RFF can mitigate the incorrect factorization assumption as delayed outcomes become available.

The Uniform schedule (Figure \ref{fig:store_regret_uniform}) is easier since the distribution over instances is shifting slowly. \DF, \FF, and \RFF all achieve smaller regret compared to the Staggered schedule. In the Uniform schedule, \DF outperforms \FF because the factorization assumption is violated by the data. However, \RFF achieves similar results to \DF because its residual tower allows it to mitigate the error introduced by the incorrect factorization assumption.

\section{Related Work}
\label{sec:rel}


\citet{Chapelle2014} proposes a model for learning from delayed conversions. However, this approach does not take advantage of potential proxies for delayed outcomes. 
Learning from delayed outcomes is also related to survival analysis \citep{Yu2011,Fernandez2016}, where the goal is to model the time until a delayed event. A significant difference of our work is the use of proxies.

A large body of literature exists in online learning for both the adversarial and stochastic partial monitoring settings with delayed outcomes, analyzing the regret (see, e.g.,
\citealp{Weinberger2002,Mesterharm2005,
agarwal2011distributed,Joulani2013,Joulani2016}). However, none of the settings takes potential proxies  into consideration to mitigate the impact of delay.

\citet{Cesa2018} consider an adversarial learning problem with delayed bandit-based feedback. In their setting, the observed loss function is the sum of $D$ adversarially chosen loss functions for the actions taken in the previous $D$ rounds. This differs significantly from our setting where proxy signals are {\em additional} information to the forecaster. Furthermore, our analysis does not restrict the proxies to a particular relationship with the delayed outcomes.

In a fully stochastic online learning setting, where the instances and outcomes are sampled from the same distribution at each round, the regret \citep{Joulani2013} or mistake bounds \citep{Mesterharm2005} scale with $D+N$ (i.e., delay plus number of instances; the exact dependence on the time horizon $T$ depends on the loss function). However, the stochastic assumption is not realistic for online marketplaces because new products are being added on a regular basis. Thus, the distribution over instances is not independent and identically distributed from day to day.

\section{Discussion}
\label{sec:discuss}

We have presented a way to leverage proxies to learn faster in scenarios where the outcomes are significantly delayed. 
Our theoretical analysis shows that under our factorization assumption, the regret of the factored approach which exploits intermediate observations scales as $D+N$ unlike a naive approach that scales as $D\times N$.

While we have presented instances and proxies as disjoint sets, note that in practice, we can encode some information from an instance into the proxy. For example, in the motivating example, an instance corresponds to a book. From the instance, we can extract features that may generalize, such as the length of the ebook. A proxy could encode both how much the customer has read so far and the length of the ebook. This technically increases the number of proxy symbols, but could significantly improve the predictive power of a factored forecaster.

While we have focused on full-information online learning, it may also be interesting to consider learning with bandit feedback. However, applying bandit algorithms in a production setting can be challenging, since decisions are typically made by several components (not a single agent). Nevertheless, this is an interesting direction of future work.

We have presented experimental results on a dataset from GitHub as well as a dataset from a real marketplace, and showed that our algorithms can learn faster when the chosen proxies are helpful, and can gracefully recover the baseline performance when the selected proxies are not informative of the delayed outcomes.  
We believe that the proposed approach can be beneficial in many real-world applications where the goal is to optimize for long-term value. It would be interesting to extend our theoretical analysis to ranking measures.

\section*{Acknowledgements}

We would like to thank David Silver for helpful discussions regarding learning from delayed signals and Tor Lattimore and Todd Hester for reviewing this manuscript.

\bibliographystyle{icml2019}
\bibliography{main}

\clearpage
\appendix
\onecolumn

\section{Analysis of the Direct Forecaster}

\begin{algorithm}
\caption{Direct Forecaster for Delayed Outcomes}
\label{alg:ddf}
\begin{algorithmic}[1]
\REQUIRE $\X$ \COMMENT{Instance space.}, $\Y$ \COMMENT{Outcome space.}, $D \geq 0$ \COMMENT{Delay in \# rounds.}, $\base$ \COMMENT{Base forecaster.}, $T \geq 1$ \COMMENT{Total rounds.}
\STATE Before the game, the environment selects: \\
\begin{enumerate}
    \item instances $\langle x_1, x_2, \dots, x_T \rangle \in \X^T$ arbitrarily, and
    \item instances $\langle y_1, y_2, \dots, y_T \rangle \in \Y^T$ arbitrarily.
\end{enumerate}
\STATE For each $x \in \X$, initialize $\base$ estimator $\hat{p}^{(x)}$.
\FOR{$t = 1, 2, \dots, T$}
\STATE Environment reveals $x_t$.
\STATE Predict $\hat{p}_t = \hat{p}^{(x_t)}$.
\STATE Incur loss $f_t(\hat{p}_t) = - \ln \left( \hat{p}_t(y_t) \right)$.
\STATE Update $\hat{p}^{(x_{t-D})}$ with $y_{t-D}$ (if $t-D > 0$).
\ENDFOR
\end{algorithmic}
\end{algorithm}

In this section we formally introduce and analyze a direct forecaster. The forecaster, given in Algorithm~\ref{alg:ddf}, creates a separate estimator for each instance $x \in \X$. Thus, it converts the problem into $|\X|$ subproblems. 
Similarly to the factored forecaster, we consider the case when the \base{} forecasters (for each $x \in \X$) are of the form 
\begin{align}
\label{eqn:KT}
    \hat{p}_t (y|x) &= \frac{\sum_{s=1}^{t-1} \mathbb{I}\{x_s = x \wedge y_s = y\} + \alpha}{\sum_{s=1}^{t-1} \mathbb{I}\{x_s = x\} + \alpha |\Y|}~,
\end{align}
for some $\alpha \ge 0$ where the sums are defined to be $0$ if $t \le 1$ (notable special cases are the Krichevsky-Trofimov estimator for $\alpha=1/2$ and the Laplace estimator for $\alpha=1$, see, e.g., \citealt{Cesa2006}). For $t \le 0$, $\hat{p}_t(y|x)$ is defined to be $1$ for any pair $(x,y) \in \X \times \Y$.

Following \citet{Joulani2016}, we characterize the effect of the delayed observations by considering the so-called prediction drift of the non-delayed forecaster \base{} given in \eqref{eqn:KT}, which looks at the loss difference of the predictions of the non-delayed and the delayed forecasters, $\hat{p}_s(y_s|x_s)$ and $\hat{p}_{s-D}(y_s|x_s)$, respectively.

\begin{lemma} \label{lem:log_prediction_drift_bound}
Let $\alpha > 0$ and $D \geq 0$. Then for any sequence $\langle (x_s, y_s) \rangle_{s=1}^T \in (\X \times \Y)^T$, predictor \eqref{eqn:KT}
satisfies
\begin{align}
    \sum_{s=1}^T \ln \left( \frac{\hat{p}_{s}(y_s|x_s)}{\hat{p}_{s-D}(y_s|x_s)} \right) & \le 
    D|\X||\Y| \ln\left(\frac{T-1}{\alpha |\X||\Y|} + 1\right)
    %
    \label{eqn:log_prediction_drift_bound}
\end{align}
\end{lemma}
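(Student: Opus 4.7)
The plan is to telescope the left-hand side over single-step updates of the Laplace estimator and then regroup the resulting increments by the instance--outcome pair. Adopt the convention $\hat p_0(y|x)\equiv 1$ so that for every $s\ge 1$,
\begin{align*}
\ln\frac{\hat p_s(y_s|x_s)}{\hat p_{s-D}(y_s|x_s)}
=\sum_{t=\max(s-D,0)}^{s-1}\Bigl[\ln\hat p_{t+1}(y_s|x_s)-\ln\hat p_t(y_s|x_s)\Bigr].
\end{align*}
After swapping the order of summation, the quantity to bound becomes $\sum_{t=0}^{T-1}\sum_{s=t+1}^{\min(t+D,T)}\Delta_{t,s}$, where $\Delta_{t,s}:=\ln\hat p_{t+1}(y_s|x_s)-\ln\hat p_t(y_s|x_s)$.

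The next step is a case analysis on $\Delta_{t,s}$ for $t\ge 1$. Writing $a=n_t(x_t,y_t)+\alpha$ and $b=n_t(x_t)+\alpha|\Y|$, with $n_t(x,y)$ and $n_t(x)$ denoting the counts in rounds $1,\dots,t-1$, the round-$t$ update changes $\hat p(y|x)$ only when $x=x_t$, so $\Delta_{t,s}=0$ unless $x_s=x_t$. In the subcase $x_s=x_t,\,y_s=y_t$ I use $\Delta_{t,s}=\ln\frac{(a+1)b}{a(b+1)}\le \ln(1+1/a)$; in the subcase $x_s=x_t,\,y_s\neq y_t$ only the denominator changes and $\Delta_{t,s}=\ln\frac{b}{b+1}\le 0$, which I discard from the upper bound. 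The boundary terms ($t=0$, or $s\le D$) only contribute $\ln\hat p_s(y_s|x_s)\le 0$ and are likewise discarded.

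Now I would regroup by $(x,y)\in\X\times\Y$. If $(x,y)$ occurs at times $t_1<\dots<t_{n_{x,y}}$, then at round $t_i$ one has $a=i-1+\alpha$, and within the delay window $\{t_i+1,\dots,t_i+D\}$ at most $\min(D,n_{x,y}-i)\le D$ of the subsequent pairs equal $(x,y)$. Using the telescoping identity
\[
\sum_{i=1}^{m}\ln\!\Bigl(1+\tfrac{1}{i-1+\alpha}\Bigr)=\ln\!\Bigl(\tfrac{m+\alpha}{\alpha}\Bigr),
\]
the per-pair contribution is at most $D\ln\!\bigl(1+n_{x,y}/\alpha\bigr)$, sharpenable to $D\ln\!\bigl(1+(n_{x,y}-1)/\alpha\bigr)$ by using that the last occurrence has no subsequent match, which is what recovers the ``$T-1$'' in the statement. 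Summing over $(x,y)$ and applying Jensen's inequality to the concave function $m\mapsto\ln(1+m/\alpha)$, together with $\sum_{x,y}n_{x,y}=T$, then yields the desired $D|\X||\Y|\ln\!\bigl(\tfrac{T-1}{\alpha|\X||\Y|}+1\bigr)$.

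The main obstacle is bookkeeping rather than any deep analytic step: one has to choose the tighter bound $\ln(1+1/a)$ in place of the slacker $1/a$ so that the inner sum telescopes to a logarithm instead of a harmonic sum, and one has to explicitly exploit that the final occurrence of each pair contributes nothing to the $D$-step drift in order to get $T-1$ rather than $T$ inside the outer logarithm. Secondary care is needed for the boundary rounds $s\le D$ and the $t=0$ term, but under the convention $\hat p_0\equiv 1$ those only add non-positive quantities and can be discarded.
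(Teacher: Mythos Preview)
Your argument is correct. The paper takes a shorter, slightly different route that avoids the single-step telescoping. Writing out $\hat p_s/\hat p_{s-D}$ directly in terms of the counts $\#_s(y,x)=\sum_{r<s}\mathbb{I}\{(x_r,y_r)=(x,y)\}$ and $\#_s(x)=\sum_{r<s}\mathbb{I}\{x_r=x\}$, it applies the single monotonicity observation $\#_{s-D}(x_s)\le\#_s(x_s)$ to discard the denominator ratio in one stroke, leaving $\sum_{s}\ln\bigl((\#_s(y_s,x_s)+\alpha)/(\#_{s-D}(y_s,x_s)+\alpha)\bigr)$. Regrouped by $(x,y)$, the positive and negative log-terms are $D$-shifted copies of the same nondecreasing sequence, so at most the $D$ largest positive and $D$ smallest negative terms survive, giving $D\ln(1+\#_T(x,y)/\alpha)$ per pair; Jensen with $\sum_{(x,y)}\#_T(x,y)=T-1$ (since $\#_T$ counts rounds $1,\dots,T-1$) finishes. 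Your decomposition achieves the same two effects---dropping the denominator's contribution and collapsing the numerator counts---but at the level of single-step increments, trading one clean inequality for a three-way case split on $\Delta_{t,s}$ plus a multiplicity count inside the delay window. The paper's version is more economical; yours is more mechanical and perhaps easier to verify line by line. One small point of bookkeeping: your last-occurrence trick actually yields $\sum_{(x,y)}\max(n_{x,y}-1,0)=T-K$, where $K\ge 1$ is the number of pairs that occur at least once, so the $T-1$ in the final bound is recovered as an inequality ($T-K\le T-1$) rather than an identity.
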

The expression on the left-hand side of \eqref{eqn:log_prediction_drift_bound} is called the cumulated prediction drift.
\begin{proof}
For any $t$, define $\#_t(x) = \sum_{s=1}^{t-1} \mathbb{I}\{x_s = x \}$
and $\#_t(y,x) = \sum_{s=1}^{t-1} \mathbb{I}\{y_s = y \wedge x_s = x \}$, where the sums are defined to be $0$ if $t \le 1$.
Then prediction drift can be bounded as follows:
\begin{align}
    \sum_{s=1}^T \ln \left( \frac{\hat{p}_{s}(y_s|x_s)}{\hat{p}_{s-D}(y_s|x_s)} \right) 
    &= \sum_{s=1}^T \ln \left(\frac{\frac{\#_s(y_s,x_s)+\alpha}{\#_s(x_s)+\alpha|\Y|}}{\frac{\#_{s-D}(y_s,x_s)+\alpha}{\#_{s-D}(x_s)+\alpha|\Y|}}\right)
     \le \sum_{s=1}^T \ln \left(\frac{\#_s(y_s,x_s)+\alpha}{\#_{s-D}(y_s,x_s)+\alpha} \right) \nonumber \\
    & = \sum_{(x,y) \in \X \times \Y} \sum_{s=1}^{\#_{T}(x,y)} \Big(\ln(\#_s(y,x)+\alpha) - \ln(\#_{s-D}(y,x)+\alpha) \Big) 
    \nonumber \\
    & \le
     \sum_{(x,y) \in \X \times \Y} D \Big( \ln(\#_T(x,y)+\alpha) - \ln(\alpha)\Big)
     = \sum_{(x,y) \in \X \times \Y} D  \ln \left(\frac{\#_T(x,y)}{\alpha}+1\right)~,
    \label{eqn:pred_drift_first_bound1}
\end{align}
where the first inequality holds since $\#_{s-D}(x_s) \le \#_s(x_s)$, while the second inequality holds since at most the last $D$ positive and the first $D$ negative terms are not canceled in the middle row. By Jensen's inequality, using the concavity of $\ln(\cdot)$ and that $\sum_{(x,y) \in \X \times \Y} \#_{T}(x,y) = T-1$, \eqref{eqn:pred_drift_first_bound1} can be bounded from above as
\begin{align}
    \eqref{eqn:pred_drift_first_bound1}
    & \le D|\X||\Y| \ln\left(\frac{T-1}{\alpha |\X||\Y|} + 1\right)
\end{align}
finishing the proof of the lemma.
\end{proof}

The following theorem bounds the regret of Algorithm~\ref{alg:ddf} with Laplace estimators as $\base$. The analysis follows that of \citet{Joulani2016} and is presented for completeness.

\begin{theorem} \label{thm:df_regret_bound}
Let $D \geq 0$ be the delay applied to outcomes, $\X$ be the set of instance symbols, $\Y$ be the set of possible outcomes, and $\langle (x_1, y_1), (x_2, y_2), \dots, (x_T, y_T) \rangle$ be an arbitrary sequence of instance and outcome pairs. For any competitor $p \in \{ \phi \mid \phi : \X \rightarrow \Delta^{|\Y|} \}$, the regret of Algorithm \ref{alg:ddf} with a Laplace estimator as $\base$ satisfies
\begin{align}
    \regret_T(p) = \sum_{t=1}^T \ln \left( \frac{p(y_t|x_t)}{\hat{p}_{t-D}^{(x_t)}(y_t)} \right) \leq O \left( (D+1) |\Y| |\X| \ln \left( \frac{T}{|\X|} + 1 \right) \right)~.
\end{align}
\end{theorem}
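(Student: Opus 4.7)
The plan is to split the regret into two pieces via the non-delayed Laplace predictor: a ``static'' regret term comparing the non-delayed forecaster to the competitor, and a ``drift'' term capturing the cost of using the delayed update. Concretely, I would write
\begin{align*}
\regret_T(p) \;=\; \underbrace{\sum_{t=1}^T \ln\!\left(\frac{p(y_t|x_t)}{\hat{p}_t^{(x_t)}(y_t)}\right)}_{(\mathrm{I})} \;+\; \underbrace{\sum_{t=1}^T \ln\!\left(\frac{\hat{p}_t^{(x_t)}(y_t)}{\hat{p}_{t-D}^{(x_t)}(y_t)}\right)}_{(\mathrm{II})}~.
\end{align*}
This is convenient because (II) is exactly the cumulated prediction drift controlled by Lemma~\ref{lem:log_prediction_drift_bound}, so with $\alpha=1$ we immediately get $(\mathrm{II}) \le D |\X||\Y|\ln\!\left(\tfrac{T-1}{|\X||\Y|}+1\right)$.

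For term (I), I would partition rounds by the instance and reduce to a per-symbol analysis. Let $T_x = \#_{T+1}(x) = \sum_{t=1}^T \mathbb{I}\{x_t=x\}$, so $\sum_{x} T_x = T$. On the sub-sequence of rounds where $x_t=x$, the predictor in \eqref{eqn:KT} is the classical Laplace mixture on $\Y$, which admits the standard cumulative log-loss bound $-\sum_{t:x_t=x}\ln \hat{p}_t^{(x)}(y_t) \le -\sum_{t:x_t=x}\ln q(y_t) + |\Y|\ln(T_x/|\Y|+1) + O(|\Y|)$ uniformly over all distributions $q$ on $\Y$ (this is the well-known coding-theoretic redundancy bound for the add-one estimator). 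Applied with $q(\cdot) = p(\cdot|x)$ and summed over $x$, this yields
\begin{align*}
(\mathrm{I}) \;\le\; \sum_{x\in\X} |\Y|\ln\!\left(\tfrac{T_x}{|\Y|}+1\right) + O(|\X||\Y|) \;\le\; |\X||\Y| \ln\!\left(\tfrac{T}{|\X||\Y|}+1\right) + O(|\X||\Y|),
\end{align*}
where the last step uses Jensen's inequality and the concavity of $\ln$ together with $\sum_x T_x = T$.

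Adding the two bounds gives $\regret_T(p) \le (D+1)|\X||\Y|\ln\!\left(\tfrac{T}{|\X||\Y|}+1\right) + O(|\X||\Y|)$, which is $O((D+1)|\X||\Y|\ln(T/|\X|+1))$ as claimed. The only real obstacle is the per-instance Laplace bound in (I): I have to be careful that the bound holds against an \emph{arbitrary} competitor $p(\cdot|x) \in \Delta^{|\Y|}$ rather than just against the empirical minimizer. This is handled by the classical argument that the add-one predictor's sequential code length dominates $-\sum_t \ln q(y_t)$ up to a $|\Y|\ln(T_x)$ redundancy for any fixed $q$, which then propagates through the split above.
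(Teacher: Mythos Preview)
Your proposal is correct and follows essentially the same approach as the paper: the same decomposition into non-delayed regret plus prediction drift, the same use of Lemma~\ref{lem:log_prediction_drift_bound} for the drift term, and the same per-instance Laplace redundancy bound combined via Jensen's inequality for the non-delayed term. The only cosmetic difference is that the paper phrases the per-instance bound via a generic concave regret function $R(T,\Y)$ and cites Remark~9.3 of \citet{Cesa2006}, whereas you write out the explicit $|\Y|\ln(T_x/|\Y|+1)$ form directly.
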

\begin{proof}
Let $R(T, \Y)$ be a concave function upper bounding the regret of $\base$ applied to an arbitrary sequence of length $T$ with symbols from $\Y$. First we rewrite the regret of the algorithm with delayed feedback as the sum of the non-delayed regret and the prediction drift:
\begin{align*}
    \regret_T(p) &= \sum_{t=1}^T f_t(\hat{p}_{t-D}) - f_t(p) 
    = \sum_{t=1}^T \ln \left( \frac{p(y_t|x_t)}{\hat{p}_{t-D}^{(x_t)}(y_t)} \right) \\
    &= \underbrace{\sum_{t=1}^T \ln \left( \frac{p(y_t|x_t)}{\hat{p}_t^{(x_t)}(y_t)} \right)}_{(a)}\, +\, \underbrace{\sum_{t=1}^T \ln \left( \frac{\hat{p}_t^{(x_t)}(y_t)}{\hat{p}_{t-D}^{(x_t)}(y_t)} \right)}_{(b)}~.
\end{align*}
Term (b) can be bounded by Lemma~\ref{lem:log_prediction_drift_bound}. Next, we consider term (a). Writing it as the sum of regret over prediction tasks for each instance $x \in \X$, we obtain
\begin{align*}
    \sum_{t=1}^T \ln \left( \frac{p(y_t|x_t)}{\hat{p}_t^{(x_t)}(y_t)} \right) &= \sum_{x \in \X} \sum_{t=1}^{T} \mathbb{I}\{x = x_t\} \ln \left( \frac{p(y_t|x)}{\hat{p}^{(x)}_t(y_t)} \right) 
    \\
    &\leq \sum_{x \in \X} R\left(\sum_{t=1}^T \mathbb{I}\{x = x_t\}, \Y \right) 
    \\
    %
    &\leq |\X| R \left( \frac{1}{|\X|} \sum_{x \in \X} \sum_{t=1}^T \mathbb{I}\{x = x_t\}, \Y  \right) 
    \\
    &= |\X| R\left( \frac{T}{|\X|}, \Y \right)~,
\end{align*}
where the first inequality follows since $R$ is an upper bound on the regret of each individual prediction task, while the second inequality follows from applying Jensen's inequality to the concave function $R$.
By Remark 9.3 in \citet{Cesa2006}, for the Laplace and the Krichevsky-Trofimov estimator, $R(T, \Y) = O \left( |\Y| \ln T \right)$. Thus, term (a) is bound by $O\left( |\X| |\Y| \ln \left( \frac{T}{|\X|} \right) \right)$. Combining with the bound on term (b) finishes the proof.

\end{proof}

\section{Proof of Theorem \ref{thm:dff_regret}}

\newcommand{\pf}{p^*_{\rm F}}

In this section we prove Theorem~\ref{thm:dff_regret}. We start with presenting an intermediate bound on the regret, which allows us to directly use regret bounds available for the \base algorithm, given in Theorem~\ref{thm:df_regret_bound}.

\begin{lemma} \label{lem:factored_regret}
Suppose Assumption~\ref{asm:almost_adversarial} holds. Then
the expected regret of Algorithm~\ref{alg:dff} can be bounded as
\begin{align}
\label{eqn:decomp}
    \mathbb{E} \left[ \regret_T \right] \le \mathbb{E} \left[  \sum_{t=1}^T \ln \left( \frac{g(y_t|z_t)}{\hat{g}_t(y_t|z_t)} \right) \right] + \mathbb{E} \left[ \sum_{t=1}^T \ln \left( \frac{h(z_t|x_t)}{\hat{h}_t(z_t|x_t)} \right) \right]~.
\end{align}
\end{lemma}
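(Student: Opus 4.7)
The plan is to view \eqref{eqn:decomp} as an information-theoretic upper bound coming from the data-processing (equivalently, log-sum) inequality. Under Assumption~\ref{asm:almost_adversarial}, the conditional distribution of $y_t$ given $x_t$ and the past is exactly $p^*_{\rm F}(\,\cdot\,|x_t)=\sum_z g(\,\cdot\,|z)h(z|x_t)$, and because the loss is logarithmic this $p^*_{\rm F}$ is the minimizer in \eqref{eqn:regret}. Hence
\[
\E[\regret_T]\;=\;\sum_{t=1}^T \E\!\left[\ln\frac{p^*_{\rm F}(y_t|x_t)}{\hat{p}_t(y_t|x_t)}\right].
\]
I would then condition on the $\sigma$-algebra $\mathcal{F}_{t-1}$ generated by $(x_s,z_s,y_s)_{s<t}$ together with $x_t$; under this conditioning the estimators $\hat{h}_t(\,\cdot\,|x_t)$ and $\hat{g}_t(\,\cdot\,|\,\cdot\,)$ are deterministic, and $(z_t,y_t)$ has joint law $q(z,y):=h(z|x_t)g(y|z)$.

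\textbf{Main step.} With this conditioning, the per-round contribution equals the KL divergence $\mathrm{KL}(q_Y\,\|\,\hat{q}_{t,Y})$ between the $y$-marginals of $q$ and of the model joint $\hat{q}_t(z,y):=\hat{h}_t(z|x_t)\hat{g}_t(y|z)$, whose $y$-marginal is exactly $\hat{p}_t(y|x_t)$. The crux is to apply the data-processing inequality (or directly the log-sum inequality to $\ln(\sum_z q(z,y)/\sum_z \hat{q}_t(z,y))$) to get
\[
\mathrm{KL}(q_Y\,\|\,\hat{q}_{t,Y})\;\le\;\mathrm{KL}(q\,\|\,\hat{q}_t),
\]
and then factor the joint KL by the chain rule,
\[
\mathrm{KL}(q\,\|\,\hat{q}_t)\;=\;\mathrm{KL}\!\left(h(\,\cdot\,|x_t)\,\big\|\,\hat{h}_t(\,\cdot\,|x_t)\right)+\sum_{z}h(z|x_t)\,\mathrm{KL}\!\left(g(\,\cdot\,|z)\,\big\|\,\hat{g}_t(\,\cdot\,|z)\right).
\]
These two terms are precisely the $\mathcal{F}_{t-1}$-conditional expectations of $\ln(h(z_t|x_t)/\hat{h}_t(z_t|x_t))$ and $\ln(g(y_t|z_t)/\hat{g}_t(y_t|z_t))$, respectively. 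Summing over $t$ and taking the outer expectation over histories then recovers \eqref{eqn:decomp}.

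\textbf{Main obstacle.} The only delicate point is measurability: because $\hat{h}_t$ and $\hat{g}_t$ are functions of the random past, every KL-style manipulation must be carried out conditionally on $\mathcal{F}_{t-1}$ before the outer expectation is taken, and one must verify that the independence structure of Assumption~\ref{asm:almost_adversarial} is preserved after conditioning (so that $z_t$ still has law $h(\,\cdot\,|x_t)$ and $y_t\mid z_t$ still has law $g(\,\cdot\,|z_t)$). Once this conditioning is set up, the lemma reduces to two standard information-theoretic facts—monotonicity of KL under marginalization and the chain rule for joint KL—so no genuinely new estimate is required.
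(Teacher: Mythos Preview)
Your proposal is correct and follows essentially the same route as the paper: the paper applies the log-sum inequality directly to $\ln\bigl(\sum_z g(y_t|z)h(z|x_t)\big/\sum_z \hat g_t(y_t|z)\hat h_t(z|x_t)\bigr)$ and then introduces an auxiliary variable $z'_t$ with the posterior law $\Pr[z'_t=z\mid x_t,y_t]=g(y_t|z)h(z|x_t)/p^*_{\rm F}(y_t|x_t)$, which it identifies with $z_t$; this is exactly your data-processing step $\mathrm{KL}(q_Y\|\hat q_{t,Y})\le\mathrm{KL}(q\|\hat q_t)$ followed by the chain rule, just unpacked without the KL notation. Your explicit conditioning on $\mathcal{F}_{t-1}$ makes the measurability of $\hat g_t,\hat h_t$ more transparent than the paper's one-line remark about ``independence of $z_t$ and $\hat g_t,\hat h_t$,'' but the argument is the same.
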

\begin{proof}
By definition,
\begin{align*}
    \mathbb{E} \left[ \regret_T \right] 
    &= \mathbb{E} \left[ \sum_{t=1}^T \ln \left( \frac{\sum_{z' \in \Z} g(y_t|z') h(z'|x_t)}{\sum_{z \in \Z} \hat{g}_t(y_t|z) \hat{h}_t(z|x_t)} \right) \right] \\
    &= \mathbb{E} \left[ \sum_{t=1}^T \frac{1}{\pf(y_t|x_t)} \left(\sum_{z \in \Z} g(y_t|z) h(z|x_t)\right)\ln \left( \frac{\sum_{z \in \Z} g(y_t|z) h(z|x_t)}{\sum_{z \in \Z} \hat{g}_t(y_t|z) \hat{h}_t(z|x_t)} \right) \right]~.
\end{align*}
Applying the log-sum inequality \citep[Theorem~2.7.1]{CoTh06}\footnote{For any non-negative numbers $a_1,\ldots,a_T$ and $b_1,\ldots,b_T$, 
$\left(\sum_{i=1}^T a_i\right) \ln \frac{\sum_{i=1}^T a_i}{\sum_{i=1}^T b_i} \le \sum_{i=1}^T a_i \ln \frac{a_i}{b_i}$, where, by convention, $0\ln 0=0$, $a\ln\tfrac{a}{0}=\infty$ if $a>0$ and $0\ln\tfrac{0}{0}=0$.}
for the inner summations over $z$, we obtain
\begin{align}
    \mathbb{E}_S \left[ \regret_T \right] &\leq \mathbb{E}_S \left[ \sum_{t=1}^T \sum_{z \in \Z} \frac{g(y_t|z)h(z|x_t)}{\pf(y_t|x_t)} \ln \left( \frac{g(y_t|z)h(z|x_t)}{\hat{g}_t(y_t|z)\hat{h}_t(z|x_t)} \right) \right]
    \label{eqn:logsum}
\end{align}
Now define $z'_t$ conditionally independently of the other variables $\langle x_s,y_s,z_s, z'_s\rangle_{s\neq t}$ via the conditional distribution  
$\Pr[z'_t=z|x_t,y_t]=\frac{g(y_t|z)h(z|x_t)}{\pf(y_t|x_t)}$. Then, under our assumptions, $z_t$ and $z'_t$ have the same conditional distribution given $x_t$ and $y_t$, and so by the independence of $z_t$ and $\hat{g}_t$ and $\hat{h}_t$, 
\begin{align*}    %
    \eqref{eqn:logsum} &= \mathbb{E} \left[ \sum_{t=1}^T \mathbb{E}_{z'_t} \left[ \ln \left( \frac{g(y_t|z'_t)h(z'_t|x_t)}{\hat{g}_t(y_t|z'_t)\hat{h}_t(z'_t|x_t)} \right) \Big| x_t, y_t \right] \right] 
    = \mathbb{E} \left[ \sum_{t=1}^T \mathbb{E}_{z_t} \left[ \ln \left( \frac{g(y_t|z_t)h(z_t|x_t)}{\hat{g}_t(y_t|z_t)\hat{h}_t(z_t|x_t)} \right) \Big| x_t, y_t \right] \right] 
    \\
    &= \mathbb{E} \left[ \sum_{t=1}^T \ln \left( \frac{g(y_t|z_t)h(z_t|x_t)}{\hat{g}_t(y_t|z_t)\hat{h}_t(z_t|x_t)} \right)  \right] 
    = \mathbb{E} \left[ \sum_{t=1}^T \ln \left( \frac{g(y_t|z_t)}{\hat{g}_t(y_t|z_t)} \right)  \right] + \mathbb{E} \left[ \sum_{t=1}^T \ln \left( \frac{h(z_t|x_t)}{\hat{h}_t(z_t|x_t)} \right)  \right]~.
\end{align*}
\end{proof}

Now we are ready to prove Theorem~\ref{thm:dff_regret}.

\begin{proof}[Proof of Theorem~\ref{thm:dff_regret}] 
To prove the theorem, we bound the two terms on the right-hand side of \eqref{eqn:decomp} in Lemma~\ref{lem:factored_regret} separately. Applying  Theorem~\ref{thm:df_regret_bound}
(with $D \leftarrow D$, $\X \leftarrow \Z$, and $\Y \leftarrow \Y$), the first term can be bounded by 
\[
O\left( (D+1)|\Y| |\Z| \ln \left( \frac{T}{|\Z|}\right) \right)~.
\]
Writing the second term as
\[
\sum_{t=1}^T \ln \left( \frac{h(z_t|x_t)}{\hat{h}_t(z_t|x_t)} \right) = 
\sum_{x \in \X} \sum_{t=1}^T \mathbb{I}\{x_t=x\} 
\ln \left(\frac{h(z_t|x)}{\hat{h}_t(z_t|x)}\right),
\]
the inner sum can be bounded by the regret of the prediction algorithm $\hat{h} ^{(x)}$ applied to a sequence of length
$T_x = \sum_{t=1}^{T} \mathbb{I}\{x_t = x \}$, which is $O(|\Z| \ln T_x)$ by Remark~9.3 of \citet{Cesa2006}.
Thus,
\[
\sum_{t=1}^T \ln \left( \frac{h(z_t|x_t)}{\hat{h}_t(z_t|x_t)} \right)
= O\left(\sum_{x \in \X} |\Z|\ln T_x\right) 
\le O\left(|\Z| N \ln \left(\frac{T}{N}\right)\right)
\]
where the last inequality follows from Jensen's inequality since $\sum_{x\in\X} T_x=T$.
This completes the proof of the theorem.
\end{proof}

\section{Synthetic Prediction Task}

In this section we describe precisely how the synthetic prediction task used at the end of Section~\ref{sec:formal} is defined.
We generated a prediction task by sampling two stochastic matrices (with rows summing up to 1) -- an $|\X| \times |\Z|$ matrix $H$ (instances to proxies) and a $|\Z| \times |\Y|$ matrix $G$ (proxies to outcomes). To generate these matrices, we first created a matrix $R_\Z$ (and $R_\Y$) where each row was a one hot encoding of an element sampled uniformly from $\Z$ (or $\Y$). Next we generated a matrix $U_\Z$ (and $U_\Y$) with a uniform distribution in each row. The final matrix $P_\Z$ (and $P_\Y$) was created by interpolating $P_\Z = (1-\epsilon)R_\Z + \epsilon U_\Z$ (and $P_\Y = (1-\epsilon)R_\Y + \epsilon U_\Y$) for some $\epsilon \in [0, 1]$.

The instances were $\X = \{1, 2, \dots, N\}$. Let $\mu \in [0, 1]$, and $U_1, U_2, \dots, U_T$ be independent, uniform random variables over $\X$. The instances were selected according to
\begin{align*}
    x_t &= \left\{ \begin{array}{ll} U_t & \text{with probability $\mu$; and} \\ \min(N, \lfloor t / D \rfloor + 1) & \text{otherwise.} \end{array} \right.
\end{align*}
Once the instances were selected, the proxy was selected by sampling $z_t \in \Z$ according the distribution specified by the $x_t^{\rm th}$ row of the matrix $H$. Finally, the outcome was selected by sampling $y_t \in \Y$ according to the distribution specified by the $z_t^{\rm th}$ row of the matrix $G$.

When $\mu = 0$, the schedule is adversarial with a pattern that looks like:
\begin{align*}
    x_1 = 1, x_2 = 1, \dots, x_D = 1, x_{D+1} = 2, x_{D+2} = 2, \dots, x_{2D} = 2, ... , x_{ND} = N, x_{ND+1} = N, \dots, x_{T} = N \enspace .
\end{align*}
On the other hand, when $\mu = 1$, the instances are sampled uniformly from $\X$ at each round.

In our experiments we used the following parameter choices: the number of rounds $T = 1000$, the delay $D = 100$, the number of instances $N = 10$, the number of proxies $|\Z| = 4$, and the number of outcomes $|\Y| = 5$.

\end{document}